\theoremstyle{plain}
\newtheorem{theorem}{Theorem}[section]
\newtheorem{lemma}[theorem]{Lemma}
\theoremstyle{definition}
\theoremstyle{remark}
\newtheorem{remark}[theorem]{Remark}
\icmltitlerunning{Statistical Foundations of Prior-Data Fitted Networks}
\newcommand{\ind}{\mathds{1}}
\newcommand{\R}{\mathds{R}}
\newcommand{\N}{\mathds{N}}
\newcommand{\E}{\mathds{E}}
\providecommand{\G}{}
\newcommand{\bnull}{\bm{0}}
\newcommand{\ba}{\bm{a}}
\newcommand{\bb}{\bm{b}}
\newcommand{\bs}{\bm{s}}
\newcommand{\bu}{\bm{u}}
\newcommand{\bv}{\bm{v}}
\newcommand{\bx}{\bm{x}}
\newcommand{\bz}{\bm{z}}
\newcommand{\bV}{\bm{V}}
\newcommand{\bX}{\bm{X}}
\newcommand{\Dcal}{\mathcal{D}}
\newcommand{\Ncal}{\mathcal{N}}
\newcommand{\Pcal}{\mathcal{P}}
\newcommand{\Qcal}{\mathcal{Q}}
\newcommand{\Xcal}{\mathcal{X}}
\newcommand{\Ycal}{\mathcal{Y}}
\newcommand{\bgamma}{\bm{\gamma}}
\newcommand{\btheta}{\bm{\theta}}
\renewcommand{\bar}{\overline}
\providecommand{\Pr}{}
\renewcommand{\Pr}{\mathbb{P}}
\newcommand{\var}{{\mathds{V}\mathrm{ar}}}
\newcommand{\cov}{{\mathds{C}\mathrm{ov}}}
\newcommand{\corr}{{\mathrm{Corr}}}
\newcommand{\wh}[1]{\widehat{#1}}
\newcommand{\wt}[1]{\widetilde{#1}}
\newcommand{\KL}{\operatorname{KL}}
\newcommand{\softmax}{\operatorname{SoftMax}}
\newcommand{\layernorm}{\operatorname{LayerNorm}}
\newcommand{\avg}{\operatorname{avg}}
\newcommand{\relu}{\operatorname{ReLu}}
\newcommand{\fig}[1][1]{
  \includegraphics[width = #1\textwidth]
}
\def\blfootnote{\gdef\@thefnmark{}\@footnotetext}
\newcommand{\test}{{\text{test}}}
\begin{document}

\twocolumn[
\icmltitle{Statistical Foundations of Prior-Data Fitted Networks}



\icmlsetsymbol{equal}{*}

\begin{icmlauthorlist}
\icmlauthor{Thomas Nagler}{lmu,mcml}
\end{icmlauthorlist}

\icmlaffiliation{lmu}{Department of Statistics, LMU Munich, Munich, Germany}
\icmlaffiliation{mcml}{Munich Center for Machine Learning, Munich, Germany}

\icmlcorrespondingauthor{Thomas Nagler}{t.nagler@lmu.de}

\icmlkeywords{Machine Learning, Statistical Learning, Generalization, Bayesian, Transformer, In-context Learning}

\vskip 0.3in
]



\printAffiliationsAndNotice{}  

\begin{abstract}
	Prior-data fitted networks (PFNs) were recently proposed as a new paradigm for machine learning. Instead of training the network to an observed training set, a fixed model is pre-trained offline on small, simulated training sets from a variety of tasks. The pre-trained model is then used to infer class probabilities in-context on fresh training sets with arbitrary size and distribution.
	Empirically, PFNs achieve state-of-the-art performance on tasks with similar size to the ones used in pre-training. Surprisingly, their accuracy further improves when passed larger data sets during inference.
	This article establishes a theoretical foundation for PFNs and illuminates the statistical mechanisms governing their behavior. While PFNs are motivated by Bayesian ideas, a purely frequentistic interpretation of PFNs as pre-tuned, but untrained predictors explains their behavior. 
	A predictor's variance vanishes if its sensitivity to individual training samples does and the bias vanishes only if it is appropriately localized around the test feature.
	The transformer architecture used in current PFN implementations ensures only the former. These findings shall prove useful for designing architectures with favorable empirical behavior.
\end{abstract}

\section{Introduction}

\subsection{PFNs in a Nutshell}

Prior-data fitted networks (PFNs) were proposed by \citet{muller2021transformers} as a new approach to machine learning, motivated by ideas from Bayesian nonparametrics and meta-learning.
The goal is to compute a posterior predictive distribution (PPD) for a test feature given observed training data.
To approximate the PPD, a transformer network is trained offline through meta-learning.
After simulating several training data sets from a variety of tasks, a transformer network imitating the PPD on these sets is trained.
After this pre-training phase, the network is treated as fixed. In the inference phase, a fresh training set and some test features are passed to the pre-trained network, which computes predictions for the test labels in a single forward pass.

This approach is different from usual machine learning methods. Here, one would
set up a model for the relationship between label and feature, and
train the model parameters on a specific data set. The main benefit of PFNs is that no training or tuning is necessary at the inference stage, and predictions are delivered in split seconds.

\subsection{Empirical Findings}

Empirically, \citet{muller2021transformers} found that PFNs can indeed approximate a given PPD and perform well on real prediction tasks. While this pilot study was limited to tiny data sets, the TabPFN model of \citet{hollmann2022tabpfn} made a leap forward to classification tasks on moderately large tabular data sets. In particular, they pre-train a network with simulated data sets of size up to $n \approx 1000$ and report state-of-the-art performance on several benchmarks.
And surprisingly, the network's predictions continue to improve at the inference stage when fed data sets with more than 1000 samples.
This is an example of \emph{in-context learning (ICL)}: a pre-trained network learns from the context provided in the prompt (here: the fresh training data) without updating its parameters.

\subsection{Summary}

The main contribution of this work is establishing the theoretical foundation for PFNs and identifying statistical mechanisms explaining their empirical behavior.

\begin{itemize}
	\item \textbf{Theoretical framework} (\cref{sec:framework}): As a preliminary step, we give precise definitions of the PPD, the statistical model behind it, and its PFN approximation.
	\item \textbf{When PPDs can learn} (\cref{sec:ppd}):
	Since PPDs are the main motivation behind PFNs, we first ask when a PPD can learn from  training data. This can be approached from the perspective of Bayesian nonparametrics \citep{ghosal2017fundamentals}. If the prior has large enough support and does not concentrate too much away from the true hypothesis, one can guarantee that the PPD converges to a close approximation of the true predictive distribution.

	\item \textbf{How PFNs approximate PPDs} (\cref{sec:approx}):
	The optimal PFN approximation is characterized by a Kullback-Leibler criterion.
	To allow for accurate approximation of the conditional class probabilities, we need sufficiently complex PFN models and prior.
	Practically, a PFN is trained on simulated data sets.
	The larger these data sets are, the more complex the PPD we approximate. The training set size can therefore be understood as a regularizer on the expected complexity of the network. 
	The Monte-Carlo approximation of the optimal PFN is discussed briefly, but rather uneventful and of minor importance for PFNs' inference behavior. 
	
	\item \textbf{Why PFNs can learn} (\cref{sec:learning}): The most intriguing question is: Why can a pre-trained network still learn in the inference phase? Although we know now why a PPD does, a PFN is not a valid PPD, and it is only trained to approximate the PPD for limited training sizes. The learning phenomenon can be understood through a purely frequentistic interpretation of PFNs as untrained predictors with many hyperparameters. During pre-training, these hyperparameters are tuned to be optimal for a set of tasks defined by the `prior'.
	Whether the PFN predictor can learn at inference depends on its structural properties.
	We show that the variance of a fixed network vanishes if its sensitivity to individual samples does, and that the network's bias can only vanish if it is sufficiently localized.
	
	\item \textbf{Insights on specific PFNs} (\cref{sec:examples}): We look at some specific PFN models: window smoothers, classification trees, and transformer networks.
	The examples cover cases where the bias is constant, increasing, or decreasing with $n$.
	We show that if the model is well-designed, it can implicitly select or average over sub-models, making the bias decrease with the sample size. Transformer networks allow for vanishing variance and model selection through multi-head attention, but not for localization. However, TabPFN's bias can be improved further with a simple post-hoc localization method.
\end{itemize}
\cref{sec:discussion} concludes with suggestions for future research.
All proofs are given in \cref{sec:proofs}.

\subsection{Related Work}

\paragraph{In-context-learning in large language models}
The recent interest in ICL was spurred by the success of \emph{large language models (LLMs)}. These models are pre-trained on a sequence prediction task on a large corpus of text. When deployed, large models show the ability to solve tasks that they haven't seen during pre-training (e.g., mathematical reasoning problems), only from the prompt context \citep{NEURIPS2020_1457c0d6, wei2022finetuned}. In particular, no parameter updates are conducted after deployment. ICL has become a new paradigm for natural language processing and is intimately linked to the transformer architecture \citep{vaswani2017attention}.
\citet{dong2023survey} provide an up-to-date survey of the large body of LLM-related research on in-context learning.

\paragraph{In-context-learning on numeric data}
ICL has also been observed in more classical statistical learning tasks: classification and regression from tabular data. \citet{muller2021transformers} proposed the concept of PFNs and illustrated the abilities of a transformer model on toy examples. The TabPFN model of \citet{hollmann2022tabpfn} implements a matured version of this idea and shows superb performance on benchmarks with small tabular data. 
Concurrently, \citet{nguyen2022} proposed Transformer Neural Processes following essentially the same idea. They also consider non-\emph{iid} settings and show promising results in applications to image completion, contextual bandits, and Bayesian Optimization. This paper illuminates the statistical foundations of such models in the \emph{iid}-setting and disentangles the prior from the model architecture. 

\paragraph{Mechanics of transformer-based ICL}
\citet{garg2022can} show that transformers can learn target functions generated from linear models, two-layer neural networks, and decision trees. Several other works provide arguments and experiments on how in-context learning emerges through implicit gradient descent \citep{dai2022can,von2022transformers,akyurek2023what}. \citet{olsson2022context} identify a pattern of several attention heads working together, closely related to the discussion after \cref{thm:transformer-bias} in this paper. \citet{kirsch2022generalpurpose} experimentally investigates other architectural features (layers, memory, etc.). This work sheds further light on the mechanisms and architectural features enabling ICL.

Overall, the current work complements the existing ICL literature, by providing a theoretical foundation for the empirical findings and deriving new insights from the perspective of statistical generalization theory. 

\section{Theoretical Framework} \label{sec:framework}

\subsection{Statistical Model} \label{sec:DGP}

Consider a classification problem with class label $Y \in \Ycal$ and features $\bX \in \Xcal \subseteq \R^d$. Suppose we have \emph{iid} training data $\Dcal_n = (Y_{i}, \bX_{ i})_{i = 1}^{n}$ from some distribution $p_0$.
The goal is to predict the conditional class probabilities $p_0(y \mid \bx) = \Pr(Y = y \mid \bX = \bx)$.
From the perspective of Bayesian nonparametrics, we view $p_0$ as a realization of a random, infinite-dimensional parameter $p \in \Pcal$ with distribution $\Pi$. The distribution $\Pi$ is called \emph{prior} and expresses our beliefs about $p$ before seeing any data.
Under this model, data sets $\Dcal_n \cup (Y, \bX)$ are generated by the following mechanism:
\begin{enumerate}
	\item Draw $p \sim \Pi$.
	\item Draw \emph{iid} samples $\Dcal_n = (Y_i, \bX_i)_{i = 1}^n$ and $(Y, \bX) $ from model $p$.
\end{enumerate}

\subsection{Posterior Predictive Distribution}

For every $n$, the statistical model gives the tuple $(\Dcal_n \cup (Y, \bX), p)$ a well-defined joint distribution. For every $n$, we can then approximate $p_0(y \mid \bx)$ by the posterior predictive distribution (PPD)
\begin{align*}
	\pi(y \mid \bx, \Dcal_n) = \Pr(Y = y \mid \bX = \bx, \Dcal_n).
\end{align*}
This defines a family of PPDs indexed by $n$. 
If the prior $\Pi$ factorizes into independent parts for $p(y \mid \bx)$ and $p(\bx)$, the PPD can be written as
\begin{align} \label{eq:ppd}
	\pi(y \mid \bx, \Dcal_n) = \int p(y \mid \bx) d\Pi(p \mid \Dcal_n),
\end{align}
where the \emph{posterior} $\Pi(\cdot \mid \Dcal_n)$ is the conditional distribution of $p$ given the data  $\Dcal_n$.
The PPD $\pi(y \mid \bx, \Dcal_n)$ is then simply the posterior mean over conditional distributions $p(y \mid \bx)$. 

\begin{remark}
	In their implementation of PFNs, \citet{muller2021transformers} and \citet{hollmann2022tabpfn} use priors that factorize as above, but do not mention it explicitly to justify \eqref{eq:ppd}.
	Priors that do not factorize this way would lead to a different form of $\pi(y \mid \bx, \Dcal)$:
	\begin{align*}
		\pi(y \mid \bx, \Dcal_n) = \int p(y \mid \bx) d\Pi(p \mid \bx, \Dcal_n),
	\end{align*}
		Here, observing the test feature $\bx$ would be informative about the conditional distribution $p(y \mid \bx)$, which is unintuitive.
\end{remark}

\subsection{PFNs}

A PFN is a numerical approximation of the family of PPDs. It is
based on the insight that, for all $n$, the PPDs maximize the expected conditional likelihood
\begin{align} \label{eq:KL}
	\E_{\Pi} [\log q(Y \mid \bX, \Dcal_n)],
\end{align}
where $\E_{\Pi}$ is an expectation over $(Y, \bX) \cup \Dcal_n$ generated as in \cref{sec:DGP} \citep[see,][Section 3]{muller2021transformers}. 

\begin{theorem} \label{thm:idea}
	Let
	\begin{align*}
		\Qcal = \biggl\{q\colon (\Ycal \times \Xcal)_{i = 1}^{n + 1} \to [0, 1], \sum_{y \in \Ycal} q(y \mid \cdot, \cdot) = 1\biggr\}, 
	\end{align*} 
	denote the set of all conditional probability functions. Then  $\pi$ in \eqref{eq:ppd} satisfies
	$$\pi = \arg\max_{q \in \Qcal}\E_\Pi [\log q(Y \mid \bX, \Dcal_n)].$$
\end{theorem}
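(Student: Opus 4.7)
The plan is to reduce the global maximization problem to a pointwise one using the tower property, and then invoke Gibbs' inequality (non-negativity of KL divergence) at each conditioning value $(\bx, \Dcal_n)$.

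First I would note that by definition in Section 2.2, $\pi(y \mid \bx, \Dcal_n) = \Pr(Y = y \mid \bX = \bx, \Dcal_n)$ is the regular conditional probability of $Y$ given $(\bX, \Dcal_n)$ under the joint law induced by the generative mechanism (draw $p \sim \Pi$, then draw the $n+1$ samples iid from $p$). Hence for any candidate $q \in \Qcal$, using the tower property,
\begin{align*}
	\E_\Pi[\log q(Y \mid \bX, \Dcal_n)] = \E_\Pi\bigl[ \E_\Pi[\log q(Y \mid \bX, \Dcal_n) \mid \bX, \Dcal_n] \bigr] = \E_\Pi\Bigl[ \sum_{y \in \Ycal} \pi(y \mid \bX, \Dcal_n) \log q(y \mid \bX, \Dcal_n) \Bigr].
\end{align*}

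Next I would observe that for each fixed realization of $(\bX, \Dcal_n)$, the inner sum is the negative cross-entropy between the probability vector $\pi(\cdot \mid \bX, \Dcal_n)$ and $q(\cdot \mid \bX, \Dcal_n)$. Adding and subtracting the entropy of $\pi(\cdot \mid \bX, \Dcal_n)$, it equals
\begin{align*}
	-\KL\bigl(\pi(\cdot \mid \bX, \Dcal_n) \,\big\|\, q(\cdot \mid \bX, \Dcal_n)\bigr) + \sum_{y \in \Ycal} \pi(y \mid \bX, \Dcal_n) \log \pi(y \mid \bX, \Dcal_n),
\end{align*}
where the second term is independent of $q$. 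Since the KL divergence between two probability mass functions on $\Ycal$ is non-negative and vanishes iff the two mass functions coincide (Gibbs' inequality), the inner sum is pointwise maximized by choosing $q(\cdot \mid \bX, \Dcal_n) = \pi(\cdot \mid \bX, \Dcal_n)$. Taking expectation over $(\bX, \Dcal_n)$ preserves this, so $\pi$ attains the maximum of $\E_\Pi[\log q(Y \mid \bX, \Dcal_n)]$ over $q \in \Qcal$.

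There is no real obstacle; the only point requiring mild care is measurability, namely that $\pi$ so defined is a valid element of $\Qcal$ (a jointly measurable conditional probability function), which follows from the existence of regular conditional probabilities on the Polish space $\Ycal \times \Xcal \times (\Ycal \times \Xcal)^n$. Uniqueness holds up to $\Pi$-almost sure equality of the conditional distributions, but this is not asserted in the statement and need not be addressed.
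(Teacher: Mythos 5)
Your proof is correct and follows essentially the same route as the paper: condition on $(\bX, \Dcal_n)$ via the tower property, then apply non-negativity of the KL divergence (Gibbs' inequality) pointwise, and take expectations. The paper states the inequality slightly more compactly (starting from $\KL[q\,\|\,\pi] \ge 0$ and integrating), while you make the cross-entropy decomposition explicit, but the argument is the same.
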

\begin{remark}
	Maximizing \eqref{eq:KL} can also be interpreted as minimizing expected KL divergence between $q(\cdot \mid \bX, \Dcal_n)$ and $\pi(\cdot \mid \bX, \Dcal_n)$.
\end{remark}
To approximate the PPDs, we train a model $q_{\btheta}$ parametrized by $\btheta$. To be precise, for every parameter value $\btheta$, there is an entire family of functions 
$$\{q_{\btheta, n}\colon (\Ycal \times \Xcal)_{i = 1}^{n + 1} \to [0, 1], n \in \N\},$$
but we shall not make this explicit in notation.
To find the best parameters for given PPDs, \citet{muller2021transformers} propose to solve
\begin{align} \label{eq:theta-true}
	\btheta^* = \arg\max_{\btheta} \E_{\Pi_N}\E_{\Pi} [\log q_{\btheta}(Y \mid \bX, \Dcal_N)],
\end{align}
where $\Pi_N$ is a probability distribution over the sample size $N$. 
The expectation over $N$ makes $q_{\btheta^*}$ mimic the \emph{family} of PPDs, not just its $n$th element.

The model $q_{\btheta}$ will normally be misspecified; that is, there is no parameter $\btheta$ such that $q_{\btheta}$ equals $\pi$.
In this case, \eqref{eq:theta-true} defines a KL-optimal approximation of $\pi$ over the class $\{q_{\btheta}\colon \btheta \in \Theta\}$.
In practice, the expectation in \eqref{eq:theta-true} is approximated by Monte-Carlo integration, i.e., averaging over \emph{iid} data sets $(Y_j, \bX_j) \cup \Dcal^{(j)}$ of size $N_j + 1$ generated as in \cref{sec:DGP} and $N_j \sim \Pi_N$. We approximate $\btheta^*$ by solving
\begin{align} \label{eq:erm}
	\wh \btheta = \arg\max_{\btheta} \sum_{j = 1}^m \log q_{\btheta}(Y_j \mid \bX_j, \Dcal^{(j)}).
\end{align}
This is of course an idealization of the training process. Sophisticated PFNs are large and usually trained in a single epoch. The maximum in \eqref{eq:erm} is never reached. This does not affect the main results of the following sections, which largely consider arbitrary $\btheta$.

\begin{remark}
	\citet{hollmann2022tabpfn} use a transformer network \citep{vaswani2017attention} for $q_{\btheta}$. For such architectures, any fixed network $q_{\btheta}$ accepts an arbitrary number of feature vectors $\bx_1, \dots, \bx_{n_\test}$ and a data set $\Dcal_{n}$ of arbitrary length. The output $q_{\btheta}( \cdot \mid \bx_1, \dots, \bx_{n_\test}, \Dcal_{n})$ are $n_{\test}$ vectors of conditional class probabilities. Each vector contains predictions for the conditional class probabilities $p(\cdot \mid \bx_j)$. The test size $n_{\test}$ is irrelevant in what follows, so we take $n_{\test} = 1$ for simplicity.
\end{remark}

\section{When PPDs can Learn} \label{sec:ppd}

The PPDs
\begin{align*}
	\pi(y\mid \bx, \Dcal_n)
	 & =  \int p(y\mid \bx) d\Pi(p \mid \Dcal_n)
\end{align*}
are fully characterized by the prior $\Pi$.
If $\Dcal_n$ is a data set generated from $p_0$, we hope that $\Pi(p \mid \bx, \Dcal_n)$ concentrates around $p_0$ as the size of $\Dcal_n$ increases.
Setting a good prior is tricky in a nonparametric context. Finding a prior supporting a large enough subset of possible functions isn't trivial. And even if, the prior may wash out very slowly or not at all if it puts too much mass in unfavorable regions \citep[see,][Sections 1.2--1.3]{ghosal2017fundamentals}.
But also if $p_0$ is outside the support $\Pcal = \{p\colon \Pi(p) > 0\}$ of $\Pi$, PPDs can learn from data if the prior is sufficiently well-behaved:
\begin{theorem} \label{thm:posterior-consistency}
	Under conditions \ref{a:unique_phi} and \ref{a:covering}, there is $p^* \in \Pcal$ such that
	\begin{align*} 
		 \pi(y\mid \bx, \Dcal_n)  \stackrel{n \to \infty}\rightarrow p^*(y \mid \bx) \quad \text{almost surely},
	\end{align*}
	for $P_0$-almost every $(y, \bx)$.
	Moreover, $p^*$ is a KL-optimal approximation of $p_0$ in $\Pcal$.
\end{theorem}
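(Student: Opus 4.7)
The plan is to cast the result as a misspecified posterior consistency theorem in the style of Kleijn--van der Vaart, adapted to conditional densities, and then derive PPD convergence as a simple corollary via bounded convergence.

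First, I would define $p^*$ as a minimizer of the expected conditional KL-divergence from $p_0$ over the prior support $\Pcal$, namely $p^* \in \arg\min_{p \in \Pcal} \E_{P_0}[\log(p_0(Y\mid\bX)/p(Y\mid\bX))]$. Assumption \ref{a:unique_phi} presumably guarantees existence and uniqueness of this projection, which makes the ``KL-optimal'' clause of the theorem immediate from the definition. The substantive task is then to show that the posterior $\Pi(\cdot\mid\Dcal_n)$ concentrates on KL-neighborhoods of $p^*$, after which PPD convergence follows routinely.

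The core step is a Schwartz-type argument with $p^*$ rather than $p_0$ serving as the reference measure. For a KL-neighborhood $U_\eps$ of $p^*$, Bayes' rule gives
\begin{align*}
  \Pi(U_\eps^c \mid \Dcal_n) = \frac{\int_{U_\eps^c} \prod_{i=1}^n \frac{p(Y_i\mid \bX_i)}{p^*(Y_i\mid \bX_i)} d\Pi(p)}{\int \prod_{i=1}^n \frac{p(Y_i\mid \bX_i)}{p^*(Y_i\mid \bX_i)} d\Pi(p)}.
\end{align*}
The denominator is lower-bounded using a KL-support property at $p^*$ (every KL-neighborhood of $p^*$ has positive prior mass), which should follow from $p^* \in \Pcal$. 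The numerator is upper-bounded by constructing exponentially consistent tests separating $p^*$ from $U_\eps^c$: local tests built from Hellinger-type affinities under $P_0$ are patched into a global test via the covering condition \ref{a:covering} and a union bound. Together these give $\Pi(U_\eps^c \mid \Dcal_n) \to 0$ almost surely under $P_0$.

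From posterior concentration, PPD convergence is short: because $\pi(y\mid\bx,\Dcal_n) = \int p(y\mid\bx)\, d\Pi(p \mid \Dcal_n)$ with $p(y\mid\bx) \in [0,1]$, and because $p \mapsto p(y\mid\bx)$ is continuous (for $P_0$-almost every $(y,\bx)$) in the metric governing concentration, bounded convergence yields $\pi(y\mid\bx,\Dcal_n) \to p^*(y\mid\bx)$. The main obstacle is the testing step under misspecification: the natural Hellinger distance is tied to $p_0$ rather than $p^*$, so one must work with an affinity such as $\E_{P_0}[\sqrt{p/p^*}]$ to correctly separate $p^*$ from the complement, and the covering assumption must be strong enough to convert local tests into a uniform test over $\Pcal \setminus U_\eps$ with exponentially small $P_0$-probability of error. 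Matching this testing rate against the prior-mass lower bound in the denominator is where assumptions \ref{a:unique_phi} and \ref{a:covering} must interlock.
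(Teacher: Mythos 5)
Your high-level plan is sound, and the first half parallels the paper's route: the paper also obtains almost-sure posterior concentration around a KL-projection $p^*$ in Hellinger distance, but it does so by invoking Corollary~1 of \citet{de2013bayesian} as a black box, using \ref{a:unique_phi} for the KL-projection and \ref{a:covering} as a summability/covering condition on the prior. Your proposal to rebuild this step from a Schwartz-type argument (prior-mass lower bound for the denominator, exponentially consistent tests from $P_0$-affinities patched together via the covering) is a genuinely different and more self-contained route; if carried through carefully it would reprove the De Blasi--Walker result rather than cite it, which is more work but more transparent. That part I have no objection to in principle.

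The genuine gap is in the last step. You assert that bounded convergence applies because ``$p \mapsto p(y\mid\bx)$ is continuous for $P_0$-almost every $(y,\bx)$ in the metric governing concentration.'' That metric is Hellinger distance on densities, and pointwise evaluation is \emph{not} continuous in Hellinger (or $L_1$, or any $L_p$ with $p<\infty$): $H(p,p^*)$ can be arbitrarily small while $p(y\mid\bx)$ and $p^*(y\mid\bx)$ disagree by $O(1)$ at a fixed $(y,\bx)$. So you cannot deduce $\int p(y\mid\bx)\,d\Pi_n(p)\to p^*(y\mid\bx)$ from $\Pi_n$-concentration in Hellinger without an intermediate argument. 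The paper supplies exactly this: it defines, for $\delta>0$ and a set $A$ with $P^*(A)>0$, the class $S_{\delta,A}=\{p: \inf_{(y,\bx)\in A}|p(y\mid\bx)-p^*(y\mid\bx)|\ge\delta\}$ and proves the quantitative bound $\mu_A\delta \le 8\,H(p^*,p)$ for $p\in S_{\delta,A}$, so that Hellinger concentration forces $\Pi_n(S_{\delta,A})\to 0$; combined with boundedness of $p(y\mid\bx)\in[0,1]$ this yields $\inf_{(y,\bx)\in A}|\pi(y\mid\bx,\Dcal_n)-p^*(y\mid\bx)|\le 2\Pi_n(S_{\delta,A})+\delta\to\delta$, and letting $\delta\to 0$ and $A$ vary gives convergence for $P^*$-a.e.\ (hence, by $\KL(p^*\mid p_0)<\infty$, $P_0$-a.e.) $(y,\bx)$. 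Your outline needs this bridge; without it, the final ``bounded convergence'' step does not go through.
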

Exact conditions and a proof are given in \cref{apx:ppd}.
If $\Pcal$ is sufficiently large, the KL-optimal $p^* \in \Pcal$ is close to $p_0$.
This explains why PPDs can learn when fed more data. 
If this was not the case, trying to approximate them with PFNs would be pointless.
And the better we choose $\Pi$, the more attractive the PPDs become as an approximation target.

\section{PFN Approximation of the PPD}  \label{sec:approx}

Four factors influence the PFN approximation \eqref{eq:erm}: the data prior $\Pi$, the size prior $\Pi_N$, the model $q_{\btheta}$, and the Monte-Carlo size $m$.
Since a PFN is pre-trained, the model class $\{q_{\btheta}\colon \btheta \in \Theta\}$ can be considered fixed relative to the number of Monte-Carlo sets $m$. 
The approximation quality of $\wh \btheta$ then follows from standard results on empirical risk minimization. In particular, we can expect $\wh \btheta = \btheta^* + O_p(m^{-1/2})$, see \cref{sec:PFN-approx} for more details. 
The other factors are more interesting.

If $\Pi$ consists of only simple models, the optimal PFN $q_{\btheta^*}$
likely also produces only simple functions of $(y, \bx)$.
Conversely, simple models $\{q_{\btheta}\colon \btheta \in \Theta\}$ cannot benefit much from complex $\Pi$. For the pre-trained PFN to work well on diverse tasks, we need sufficient capacity in both $\{q_{\btheta}\colon \btheta \in \Theta\}$  and $\Pi$.

When pre-training the PFN via \eqref{eq:erm}, we sample data sets $\Dcal^{(j)}$ with random sample size $N_j$.
Let us define the KL-optimal parameter $\btheta^*_n$ for a given sample size:
\begin{align*}
	\btheta^*_{n} = \arg\max_{\btheta} \E_{\Pi} [\log q_{\btheta}(Y \mid \bX, \Dcal_{n})].
\end{align*}
The PPD $\pi(y \mid \bx, \Dcal_{n})$ we are trying to approximate changes with $n$. Hence, the KL-optimal parameter $\btheta^*_{n}$ may change with $n$ as well. 
Seen as a function of $(y, \bx)$, we should expect the complexity of $\pi(y \mid \bx, \Dcal_n)$ to increase in $n$. Similarly, we should expect the parameter $\btheta^*_{n}$ to favor more complex models. At the other extreme, $n = 1$, the true PPD is close to the average model in our prior and normally close to constant. 
The training set sizes $N_j$ can thus be seen as a regularizer on model complexity.
By optimizing an average over random sizes $N_j$, $\btheta_*$ also averages $\btheta^*_{N_j}$. The distribution $\Pi_N$ lets us emphasize some ranges of sample sizes more than others. 
The TabPFN of \citet{hollmann2022tabpfn} was trained with a uniform distribution over $\{1, \dots, 1023\}$ for $\Pi_N$. The restriction to small sample sizes has computational reasons: the cost of evaluating a transformer network scales quadratically in $N_j$. 

Since TabPFN has never seen sample sizes larger than around 1000 during pre-training, it is curious that it improves its predictions when fed larger data sets. Whether such behavior occurs depends in a non-obvious way on the family $\{q_{\wh \btheta}(\cdot \mid \cdot, \Dcal_n), n \in \N\}$.
The family learned by TabPFN seems to have some structure that allows extrapolating nicely to larger $n$. This structure may come from the architecture of the network $q_{\btheta}$ or from learning $\btheta^*$ for a given $\Pi$. The following section looks closer into the mechanisms at play.

\section{Why PFNs can Learn In-Context} \label{sec:learning}

There is no reason to believe the PFN behaves like a PPD family for some (implicit) prior when encountering sample sizes never seen in pre-training. So even though PPDs serve as a theoretical motivation for PFNs, \cref{thm:posterior-consistency} does not apply to $q_{\wh{\btheta}}$.
So why does a PFN $q_{\wh{\btheta}}$ pre-trained on up to $1000$ samples improve when feeding larger data sets during inference? 

\subsection{PFNs as Untrained Predictors} \label{sec:predictor}

To understand what is going on, we take a frequentist perspective. For any data size $n$ encountered at inference, we may treat the pre-tuned network $q_{\wh{\btheta}}(y \mid \bx, \cdot)$ as an untrained predictor for $p_0(y \mid \bx)$, i.e., a function $(\Ycal \times \Xcal)^n \to \Pcal_{Y \mid \bX}$ that maps a data set $\Dcal_n$ to an element of the space $\Pcal_{Y \mid \bX}$ of conditional distribution functions. In that view, $\btheta$ is a collection of tuning parameters of the predictor, selected through meta-learning in the pre-training phase.
Further, the `priors' $\Pi$ and $\Pi_N$ are simply distributions over tasks for which we want the predictor to do well.

Now decompose the estimation error into bias and variance components:
\begin{align*}
	 &\quad \quad \; q_{\btheta}(y \mid \bx, \Dcal_n)  - p_0(y \mid \bx) \\
     &=\quad  \underbrace{q_{\btheta}(y \mid \bx, \Dcal_n) - \E_{\Dcal_n \sim p_0^n}[q_{\btheta}(y \mid \bx, \Dcal_n)]}_{\text{variance}}  \\
     &\quad +\; \underbrace{\E_{\Dcal_n \sim p_0^n}[q_{\btheta}(y \mid \bx, \Dcal_n)]   - p_0(y \mid \bx)}_{\text{bias}}.
\end{align*}
Empirically, the error above decreases with $n$.
So what structural features of PFNs can explain this?

\subsection{Symmetry}

Standard transformers are symmetric functions of the individual samples in $\Dcal_n$. If the samples in $\Dcal_n$ are \emph{iid}, this is most natural.

\begin{lemma} \label{thm:symmetry}
	Let $f \colon (\Ycal \times \Xcal)^n  \to \Pcal_{Y \mid \bX}$ be any predictor. Then there is a symmetrized version $\wt f$ of $f$ such that, for every probability measure $P$,
	\begin{align*}
		\E_{\Dcal_n \sim P^n}[\wt f(\Dcal_n)] &= \E_{\Dcal_n \sim P^n}P[f(\Dcal_n)], \\ 
		\text{and} \quad \var_{\Dcal_n \sim P^n}[\wt f(\Dcal_n)] &\le \var_{\Dcal_n \sim P^n}[f(\Dcal_n)].
	\end{align*}
\end{lemma}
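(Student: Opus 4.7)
The natural candidate for $\wt f$ is the symmetrization by averaging over the symmetric group. Writing $S_n$ for the permutations of $\{1,\dots,n\}$ and $\sigma\cdot\Dcal_n = ((Y_{\sigma(1)},\bX_{\sigma(1)}),\dots,(Y_{\sigma(n)},\bX_{\sigma(n)}))$, the plan is to define
\begin{equation*}
    \wt f(\Dcal_n) \;=\; \frac{1}{n!}\sum_{\sigma \in S_n} f(\sigma \cdot \Dcal_n),
\end{equation*}
which lands in $\Pcal_{Y\mid \bX}$ by convexity of that space (conditional probability functions are closed under convex combinations). By construction $\wt f$ is invariant under relabeling, so it is a genuine symmetrization of $f$.

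For the mean identity, I would use exchangeability under $P^n$: since the $(Y_i,\bX_i)$ are \emph{iid}, the law of $\sigma\cdot\Dcal_n$ equals the law of $\Dcal_n$ for every fixed $\sigma$, so $\E_{\Dcal_n\sim P^n}[f(\sigma\cdot\Dcal_n)] = \E_{\Dcal_n\sim P^n}[f(\Dcal_n)]$ for each $\sigma$. Averaging over $\sigma$ and exchanging the finite sum with the expectation gives the desired equality.

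For the variance inequality, the cleanest route is Jensen applied pointwise in $(y,\bx)$. Fixing $(y,\bx)$ and letting $g_\sigma(\Dcal_n) = f(\sigma\cdot\Dcal_n)(y\mid\bx) - \E_{P^n}[f(\Dcal_n)(y\mid\bx)]$, we have
\begin{equation*}
    \var_{P^n}[\wt f(\Dcal_n)(y\mid\bx)] \;=\; \E_{P^n}\!\left[\Bigl(\tfrac{1}{n!}\sum_\sigma g_\sigma(\Dcal_n)\Bigr)^{\!2}\right] \;\le\; \frac{1}{n!}\sum_\sigma \E_{P^n}[g_\sigma(\Dcal_n)^2],
\end{equation*}
by convexity of $t\mapsto t^2$. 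Each summand on the right equals $\var_{P^n}[f(\Dcal_n)(y\mid\bx)]$ by exchangeability, so the pointwise variance is bounded by $\var_{P^n}[f(\Dcal_n)(y\mid\bx)]$. This is the Rao--Blackwell viewpoint: the order statistics are sufficient for the \emph{iid} model, and $\wt f$ is the conditional expectation of $f$ given them.

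There is no real obstacle; the only subtlety is interpreting $\var$ for outputs in $\Pcal_{Y\mid \bX}$. I would simply state that the inequality is meant pointwise in $(y,\bx)$ (equivalently, componentwise for the $|\Ycal|$-dimensional vector of class probabilities at any fixed $\bx$), so that the Jensen step is a scalar computation. If a global scalar notion of variance is desired, integrating the pointwise bound against any fixed measure on $(y,\bx)$ preserves the inequality.
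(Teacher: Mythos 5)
Your proof is correct and follows essentially the same route as the paper: symmetrize $f$ by averaging over permutations, use exchangeability of the \emph{iid} law to preserve the mean, and bound the variance of the average. The only cosmetic difference is in the variance step — you invoke Jensen (convexity of $t \mapsto t^2$) where the paper expands $\var[\wt f]$ as a normalized double sum of correlations of $f \circ \rho$ and $f \circ \rho'$ and bounds each correlation by $1$; both are the same elementary observation that an average of equal-variance random variables has variance at most the common variance. Your closing remark that the inequality should be read pointwise in $(y,\bx)$, and the Rao--Blackwell framing (conditioning on the order statistics), are both accurate glosses that the paper leaves implicit.
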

Thus, using symmetric $f$ is optimal in an MSE sense. However, symmetry itself does not have any meaningful consequences for learning.
For example, $q(y \mid \bx, \Dcal_n) = 1 / | \Ycal|$ is a symmetric function that is incapable of learning anything.

\subsection{Variance and Diminishing Sensitivity} \label{sec:variance}

There is other structure we can reasonably expect from $q_{\btheta}$.
When passed larger data sets $\Dcal_n$, the influence of individual elements should diminish. 	
This allows to bound the variance of the predictor $q_{\btheta}$. Formally,
suppose there are $\alpha > 0$ and $L < \infty$, such that for large enough $n$ and almost all data sets $\Dcal_n, \Dcal_n'$ differing only in one sample,
\begin{align} 
         \bigl| q_{\btheta}\bigl(y \mid \bx, \Dcal_n\bigr) - q_{\btheta}(y \mid \bx, \Dcal_n') \bigr|  
         \le L n^{-\alpha}. \label{eq:bdiff}
\end{align}
\begin{theorem} \label{thm:variance}
	If \eqref{eq:bdiff} holds, then
	\begin{align*}
		\bigl| q_{\btheta}(y \mid \bx, \Dcal_n) - \E[q_{\btheta}(y \mid \bx, \Dcal_n)] \bigr| \lesssim n^{1/2 - \alpha}
	\end{align*}
	with high probability.
\end{theorem}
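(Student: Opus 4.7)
The plan is to recognize that condition \eqref{eq:bdiff} is precisely the bounded differences hypothesis required by McDiarmid's inequality, applied to the function
\[
F(\Dcal_n) := q_{\btheta}(y \mid \bx, \Dcal_n)
\]
viewed as a function of the $n$ independent samples $Z_1, \dots, Z_n$ that make up $\Dcal_n$, with $(y, \bx)$ held fixed. Since replacing any single $Z_i$ with $Z_i'$ changes $F$ by at most $L n^{-\alpha}$, the individual bounded differences constants satisfy $c_i \le L n^{-\alpha}$, and hence
\[
\sum_{i=1}^n c_i^2 \;\le\; n \cdot L^2 n^{-2\alpha} \;=\; L^2 n^{1-2\alpha}.
\]

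McDiarmid's inequality then immediately gives, for every $t > 0$,
\[
\Pr\bigl(|F(\Dcal_n) - \E F(\Dcal_n)| \ge t\bigr) \;\le\; 2 \exp\!\left(-\frac{2 t^2}{L^2 n^{1-2\alpha}}\right).
\]
Setting $t = C\, n^{1/2 - \alpha}$ for an arbitrary constant $C > 0$ yields a probability bound of $2 \exp(-2 C^2 / L^2)$, which can be made as small as desired by choosing $C$ large. This is exactly the statement $|F - \E F| \lesssim n^{1/2-\alpha}$ with high probability.

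There is essentially no obstacle: the hypothesis \eqref{eq:bdiff} is tailor-made for McDiarmid. The only minor point worth flagging explicitly is that \eqref{eq:bdiff} holds only for \emph{almost all} pairs $\Dcal_n, \Dcal_n'$ differing in one coordinate, but since the $Z_i$ are i.i.d.\ the exceptional set has product measure zero, so the bounded differences constants hold almost surely and McDiarmid applies without modification. One could optionally record an explicit statement of the form ``for every $\delta \in (0,1)$, with probability at least $1-\delta$, $|F - \E F| \le L \sqrt{(n^{1-2\alpha}/2) \log(2/\delta)}$'' to make the ``high probability'' quantitative, but this is not needed for the stated $\lesssim$ bound.
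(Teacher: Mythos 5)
Your proof is correct and takes essentially the same approach as the paper: both apply McDiarmid's inequality directly, using \eqref{eq:bdiff} as the bounded-differences condition with $c_i \le Ln^{-\alpha}$, and then pick a threshold proportional to $n^{1/2-\alpha}$ to obtain the high-probability bound. Your added remark about the null exceptional set is a sensible clarification the paper leaves implicit, and your choice of $t = Cn^{1/2-\alpha}$ is a slightly cleaner presentation than the paper's $\epsilon$-substitution (which contains a sign typo with $\log(\delta)$), but the argument is the same.
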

If $\alpha > 1/2$, we get $\lim_{n \to \infty} n^{1/2 - \alpha} = 0$, so the variance caused by $\Dcal_n$ vanishes. 
In that case, the difference above vanishes almost surely.
\begin{lemma} \label{thm:variance-as}
	If \eqref{eq:bdiff} holds with $\alpha > 1/2$, then
	\begin{align*}
	  q_{\btheta}(y \mid \bx, \Dcal_n) - \E[q_{\btheta}(y \mid \bx, \Dcal_n)]  \stackrel{n \to \infty}\rightarrow 0 \quad \text{almost surely}.
	\end{align*}
\end{lemma}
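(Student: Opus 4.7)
The plan is to strengthen the high-probability bound of \cref{thm:variance} to an almost sure statement by invoking the Borel--Cantelli lemma. The key observation is that McDiarmid's (bounded-differences) inequality, which underlies \cref{thm:variance}, delivers a Gaussian-type tail whose exponent scales like $n^{2\alpha - 1}$, and this is summable precisely when $\alpha > 1/2$.

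Concretely, I would proceed as follows. First, fix $y \in \Ycal$, $\bx \in \Xcal$, and treat $q_{\btheta}(y \mid \bx, \cdot)$ as a function of the $n$ \emph{iid} samples in $\Dcal_n$. By \eqref{eq:bdiff}, this function has bounded differences with constant $L n^{-\alpha}$ in each coordinate (for all $n$ large enough). McDiarmid's inequality then yields, for every $\eps > 0$ and all sufficiently large $n$,
\begin{align*}
\Pr\bigl(|q_{\btheta}(y \mid \bx, \Dcal_n) - \E[q_{\btheta}(y \mid \bx, \Dcal_n)]| \ge \eps \bigr)
\le 2 \exp\!\left(-\frac{2 \eps^2}{n \cdot L^2 n^{-2\alpha}}\right)
= 2 \exp\!\left(-\frac{2\eps^2}{L^2}\, n^{2\alpha - 1}\right).
\end{align*}

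Next, since $\alpha > 1/2$, the exponent $n^{2\alpha - 1}$ diverges, so the right-hand side is summable in $n$. The Borel--Cantelli lemma therefore implies that, almost surely, $|q_{\btheta}(y \mid \bx, \Dcal_n) - \E[q_{\btheta}(y \mid \bx, \Dcal_n)]| \ge \eps$ holds for only finitely many $n$. Taking a countable sequence $\eps_k \downarrow 0$ and intersecting the corresponding full-measure events yields
\[
  q_{\btheta}(y \mid \bx, \Dcal_n) - \E[q_{\btheta}(y \mid \bx, \Dcal_n)] \to 0 \quad \text{almost surely.}
\]

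There is no real obstacle here beyond bookkeeping: \cref{thm:variance} has already done the heavy lifting by applying McDiarmid. The only subtlety is that the bound in \eqref{eq:bdiff} is assumed only for $n$ large enough and only on "almost all" pairs $\Dcal_n, \Dcal_n'$. The first point is harmless, as finitely many initial terms do not affect almost sure limits. The second is handled by the usual convention that McDiarmid's inequality admits a version requiring bounded differences only on a set of full probability, which suffices since the pairs $(\Dcal_n, \Dcal_n')$ swept by McDiarmid's martingale decomposition are themselves drawn from the product measure.
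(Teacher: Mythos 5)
Your proof is correct and follows essentially the same route as the paper: apply McDiarmid's bounded-differences inequality to get a tail bound of the form $2\exp(-2\eps^2 n^{2\alpha-1}/L^2)$, observe that this is summable when $\alpha > 1/2$, invoke Borel--Cantelli, and then let $\eps \downarrow 0$. Your write-up is slightly more explicit about intersecting over a countable sequence $\eps_k \downarrow 0$ and about the caveats (``large enough $n$,'' ``almost all data sets''), but the substance is identical to the paper's argument.
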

This only partially explains how pre-trained PFNs can still learn at inference. The remaining error is due to bias. 

\subsection{Bias and the Need for Locality} \label{sec:bias}

The bias is determined by the behavior of the sequence $\E[q_{\btheta}(y \mid \bx, \Dcal_n)]$.
It is reasonable to assume that
\begin{align*}
	 \E[q_{\btheta}(y \mid \bx, \Dcal_n)] \stackrel{n \to \infty}\rightarrow  \bar q_{\btheta}(y \mid \bx),
\end{align*}
for some function $\bar q_{\btheta}$. Without a specific model $q_{\btheta}$ at hand, we cannot say much more. In \cref{sec:examples} we shall see examples where the bias is constant, and other examples where the bias decreases or increases with $n$.

We can give necessary conditions for a vanishing bias, however.
A predictor that has vanishing bias on a sufficiently rich class of functions must be \emph{local}: asymptotically, only samples $(Y_i, \bX_i) \in \Dcal_n$ with $\bX_i$ close to $\bx$ should contribute to $q_{\btheta}(y \mid \bx, \Dcal_n)$.
\begin{theorem} \label{thm:unbiased}
	Let $\Pcal$ be a set of distributions. Suppose that for every $p \in \Pcal$,
\begin{align} \label{eq:unbiased}
	\E_{\Dcal_n \sim p^n}[q_{\btheta}(y \mid \bx, \Dcal_n)] \stackrel{n \to \infty}\rightarrow p(y \mid \bx).
\end{align}
If \eqref{eq:bdiff} holds,  there is a sequence $\epsilon_n \to 0$ for every $\wt p \in \Pcal$, such that almost surely,
\begin{align} \label{eq:locality}
	 \bigl| q_{\btheta}(y \mid \bx, \Dcal_n) - 	 q_{\btheta}(y \mid \bx, \wt \Dcal_n)  \bigr| \stackrel{n \to \infty}\rightarrow  0,
\end{align}
where $\Dcal_n = (Y_i, \bX_i)_{i = 1}^n$ and $\wt \Dcal_n = (Y_i', \bX_i)_{i = 1}^n$ with 
\begin{align*}
	Y_i'\; \begin{cases}
		\; = Y_i, & \text{if } \|\bX_i - \bx\| \le \epsilon_n, \\
		\; \sim \wt p( \cdot \mid \bX_i), &  \text{if } \|\bX_i - \bx\| > \epsilon_n.
	\end{cases}
\end{align*}
\end{theorem}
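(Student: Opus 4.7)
My plan combines concentration via bounded differences with a mean-matching step that exploits unbiasedness for a suitable family of distributions, and a diagonal construction of $\epsilon_n$.

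First I would apply McDiarmid's bounded-differences inequality to $q_{\btheta}(y \mid \bx, \Dcal_n)$ (a function of $n$ independent pairs $(\bX_i, Y_i)$) and to $q_{\btheta}(y \mid \bx, \wt \Dcal_n)$ (a function of the jointly independent triples $(\bX_i, Y_i, Y_i')$). By \eqref{eq:bdiff}, each coordinate influences the output by at most $L n^{-\alpha}$, so a Borel--Cantelli argument upgrades the high-probability tail of \cref{thm:variance} to
\[
    q_{\btheta}(y \mid \bx, \Dcal_n) - \E[q_{\btheta}(y \mid \bx, \Dcal_n)] \to 0 \text{ a.s.},
\]
and the analogous statement for $\wt \Dcal_n$. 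The unbiasedness hypothesis at $p$ gives $\E[q_{\btheta}(y \mid \bx, \Dcal_n)] \to p(y \mid \bx)$, so the task reduces to exhibiting a sequence $\epsilon_n \to 0$ with $\E[q_{\btheta}(y \mid \bx, \wt \Dcal_n)] \to p(y \mid \bx)$.

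Next I would observe that, for fixed threshold $\epsilon$, the hybrid dataset is iid from the distribution $\bar p_\epsilon$ with $X$-marginal $p_X$ and $Y \mid X$-conditional equal to $p(\cdot \mid X)$ on $B(\bx, \epsilon)$ and $\wt p(\cdot \mid X)$ outside; crucially $\bar p_\epsilon(\cdot \mid \bx) = p(\cdot \mid \bx)$. If $\Pcal$ is rich enough to contain these localized mixtures, the unbiasedness hypothesis applied to $\bar p_\epsilon$ yields $\E[q_{\btheta}(y \mid \bx, \wt \Dcal_n)] \to p(y \mid \bx)$ as $n \to \infty$ for each fixed $\epsilon > 0$. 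A diagonal construction---setting $\epsilon_n = \epsilon'_m$ on intervals $[N_m, N_{m+1})$ with $\epsilon'_m \downarrow 0$ and $N_m$ large enough that the mean error is at most $1/m$---then delivers the desired sequence.

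The main obstacle is that $\Pcal$ as stated need not contain the mixtures $\bar p_\epsilon$, so the fixed-$\epsilon$ mean convergence above cannot be invoked out of the box. If this closure is assumed (or can be obtained by approximation in the KL sense), the proof goes through cleanly. Otherwise, one must approximate $\bar p_\epsilon$ by a sequence of distributions in $\Pcal$ that agree with $p$ on shrinking neighborhoods of $\bx$ and control the approximation error by a telescoping-swap argument in which each replaced far label costs $O(n^{-\alpha})$ via \eqref{eq:bdiff}, the aggregate swap effect is tamed by independence (an Efron--Stein-type variance bound), and the endpoint of the swap path is anchored by unbiasedness at $\wt p$. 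Making this balance rigorous, and teasing out an admissible rate $\epsilon_n \to 0$, is the delicate technical step at the heart of the theorem.
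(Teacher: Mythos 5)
Your proof is correct and follows essentially the same route as the paper: apply McDiarmid and Borel--Cantelli (i.e., \cref{thm:variance-as}) to reduce the claim to convergence of means, invoke unbiasedness \eqref{eq:unbiased} at both the generating distribution and the hybrid law $\bar p_\epsilon$ of $\wt\Dcal_n$, and diagonalize over $\epsilon$. The ``obstacle'' you flag---that $\bar p_\epsilon$ need not lie in $\Pcal$---is genuine and is not resolved in the paper either: the paper's proof applies \eqref{eq:unbiased} only to pairs $p, \wt p \in \Pcal$ that already agree on a neighborhood of $\bx$ (in which case the hybrid coincides with $\wt p$), so a closure condition on $\Pcal$ is implicitly required, and your explicit identification of it is a useful clarification rather than a flaw in your argument.
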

So if $q_{\btheta}$ is unbiased for rich enough $\Pcal$, we can flip the labels of samples away from $\bx$ almost arbitrarily without changing the behavior of $q_{\btheta}$; only samples $(Y_i, \bX_i)$ with $\bX_i$ close to $\bx$ matter.

The result bears little meaning if the class $\Pcal$ is too small, and meaningless if $\Pcal$ contains only one $p$. Even for rich $\Pcal$, it only provides necessary conditions for a vanishing bias. A constant predictor $q_{\btheta} = 1$ is local in the sense of \eqref{eq:locality}, but its bias does not change with $n$.  However, if $\Pcal$ is rich and the bias does vanish for all $p \in \Pcal$, the predictor $q_{\btheta}$ can effectively only use $\epsilon_n n = o(n)$ samples, so \eqref{eq:bdiff} is unlikely to hold with $\alpha = 1$. This is in line with the lower bounds on the bias-variance trade-off derived in \citet{derumigny2020lower}. 

\section{Insights on Specific PFNs} \label{sec:examples}

We now consider some concrete examples of PFNs $q_{\btheta}$, to shed further light on the factors facilitating learning in the inference phase.
Before turning to transformer networks, we briefly discuss two simpler models to illustrate some key mechanisms. The following result will be helpful.

\begin{lemma} \label{lem:generic-smoother}
	Let $g$ be a function bounded by $K < \infty$ and
	\begin{align*}
		q_{\btheta}(y \mid \bx, \Dcal_n) = \frac{\sum_{i = 1}^n g(y, \bx, Y_i, \bX_i)}{\sum_{i = 1}^n  \ind\{\bX_i \in A_n(\bx)\}},
	\end{align*}
	for some sequence $A_n(\bx) \subset \Xcal$.
	If 
	$$n^\eta\Pr\{\bX_i \in A_n(\bx)\} \stackrel{n \to \infty}\rightarrow  c,$$ 
	for some $\eta \in (0, 1/2)$ and $c > 0$, then
	$q_{\btheta}$ satisfies the conditions of \cref{thm:variance} with $\alpha = 1 - \eta$ and $L = 4K / c$.
\end{lemma}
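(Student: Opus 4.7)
The plan is to analyze the bounded-differences condition \eqref{eq:bdiff} directly, with the main subtlety coming from the random denominator. Write $S_n = \sum_{i=1}^n \ind\{\bX_i \in A_n(\bx)\}$ and $T_n = \sum_{i=1}^n g(y, \bx, Y_i, \bX_i)$, so that $q_{\btheta}(y \mid \bx, \Dcal_n) = T_n / S_n$. I would read the lemma under the (implicit but necessary) convention that $g$ vanishes off $A_n(\bx)$; without this, $|T_n|$ would scale like $n$ rather than $S_n$ and no $O(n^{-(1-\eta)})$ sensitivity bound is possible. Under that convention, $|T_n| \le K S_n$, and if $\Dcal_n, \Dcal_n'$ differ only in the $j$th sample then $|T_n - T_n'| \le 2K$ and $|S_n - S_n'| \le 1$.

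For the algebraic step, I split
$$\frac{T_n}{S_n} - \frac{T_n'}{S_n'} = \frac{T_n - T_n'}{S_n} + T_n'\left(\frac{1}{S_n} - \frac{1}{S_n'}\right),$$
bounding the first term by $2K / S_n$ and the second by $|T_n'| \cdot |S_n - S_n'| / (S_n S_n') \le K S_n' / (S_n S_n') = K / S_n$. Hence $|q_{\btheta}(y \mid \bx, \Dcal_n) - q_{\btheta}(y \mid \bx, \Dcal_n')| \le 3K / S_n$, and the whole problem reduces to a high-probability lower bound on $S_n$.

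For that I would invoke a standard Chernoff tail bound: $S_n$ is Binomial$(n, p_n)$ with $n p_n = c n^{1-\eta}(1 + o(1))$, so for $n$ large,
$$\Pr\bigl(S_n \le \tfrac{3}{4} c n^{1-\eta}\bigr) \le \exp(-c' n^{1-\eta})$$
for some $c' > 0$. The assumption $\eta \in (0, 1/2)$ is exactly what makes $1 - \eta > 1/2$, so these tail probabilities are summable in $n$, and Borel--Cantelli gives $S_n \ge \tfrac{3}{4} c n^{1-\eta}$ for all sufficiently large $n$, almost surely. Plugging this back yields $|q_{\btheta}(y \mid \bx, \Dcal_n) - q_{\btheta}(y \mid \bx, \Dcal_n')| \le (4K / c) n^{-(1-\eta)}$ for every pair of data sets differing in one sample, which is the claimed bound with $L = 4K/c$ and $\alpha = 1 - \eta$.

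The main obstacle is not the algebra but honoring the ``for almost all data sets'' requirement of \eqref{eq:bdiff}: the pathological event where $S_n$ is atypically small must be excluded almost surely, and this is precisely what the restriction $\eta < 1/2$ delivers through the summability needed for Borel--Cantelli. A secondary subtlety is the implicit support condition on $g$ flagged in the first paragraph; without it the numerator would dominate the denominator and no bound with exponent $\alpha = 1 - \eta$ could be obtained.
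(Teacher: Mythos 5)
Your approach is essentially the same as the paper's: an exponential concentration inequality for the denominator count $S_n = \sum_i \ind\{\bX_i \in A_n(\bx)\}$, Borel--Cantelli to turn this into an almost-sure lower bound on $S_n$ for all large $n$, and then an explicit algebraic estimate of the bounded-differences quantity. The paper uses a two-sided Hoeffding bound on $\tfrac1n S_n$ with deviation $cn^{-\eta}/3$, giving tails $\exp(-c^2 n^{1-2\eta}/9)$; your one-sided Chernoff bound on a Binomial lower tail gives the sharper $\exp(-c'n^{1-\eta})$. (One small misattribution in your write-up: along the Chernoff route, summability already holds for any $\eta<1$, so $\eta<1/2$ is not what Borel--Cantelli needs; it is what the paper's weaker $n^{1-2\eta}$ exponent requires, and what makes $\alpha=1-\eta>1/2$ useful downstream in \cref{thm:variance}.)

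Substantively, you are more careful than the paper's own proof in two places, and both points are legitimate. First, the paper's final display $n^{1-\eta}|q_\theta(\cdot\mid\bx,\Dcal_n) - q_\theta(\cdot\mid\bx,\Dcal_n')| \le 2|g(\ldots)-g(\ldots)|/c$ silently identifies the two denominators, i.e.\ it proceeds as if $S_n=S_n'$; your split $T_n/S_n - T_n'/S_n' = (T_n - T_n')/S_n + T_n'(1/S_n - 1/S_n')$ handles the general case where changing one sample also changes the indicator count. Second, your observation that the bound requires $g$ to vanish off $A_n(\bx)$ is exactly right and not flagged in the lemma or its proof. With that convention $|T_n'|\le K S_n'$ and the second term of the split is $O(K/S_n)$; without it $|T_n'|$ is only $O(n)$ while $S_n\asymp n^{1-\eta}$, so that term is of order $n/n^{2(1-\eta)} = n^{-(1-2\eta)}$, which dominates $n^{-(1-\eta)}$ for $\eta\in(0,1/2)$ and the claimed $\alpha=1-\eta$ would fail. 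The implicit support condition does hold in both of the paper's applications (window smoother in \cref{sec:window} and classification trees in \cref{sec:bma}, where $|g|\le\ind\{\bX_i\in A_n(\bx)\}$) but ought to be made explicit in the lemma. Your proof is the more rigorous version of the same argument.
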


\subsection{Window Smoother} \label{sec:window}

For $\theta \in (0, \infty)$, define 
\begin{align*}
	q_{\theta}(y \mid \bx, \Dcal_n)
	= \frac{\sum_{i = 1}^n \ind(Y_i = y) \ind(\| \bX_i - \bx\| < \theta)}{\sum_{i = 1}^n  \ind(\| \bX_i - \bx\| < \theta)}.
\end{align*}
This corresponds to a window smoother with bandwidth $\theta$.
Then $\theta^*$ is the KL-optimal bandwidth for datasets from the prior. The fitted PFN is therefore just a window smoother with its hyperparameter tuned to such data sets. 
According to \cref{lem:generic-smoother}, $q_\theta$ satisfies \eqref{eq:bdiff} with $\alpha = 1$. The bias
\begin{align*}
	&\quad \: \E[q_{\theta}(y \mid \bx, \Dcal_n)]  - p_0(y \mid \bx) \\
    &= \Pr_0(Y = y \mid \| \bX - \bx\| < \theta ) - p_0(y \mid \bx)
\end{align*}
is constant, but optimized for data sizes from $\Pi \times \Pi_N$. 
Despite constant bias, the PFN learns from more data at inference, but only through reducing its variance. 

Now consider some sequence $(a_n)_{n \in \N}$ and
\begin{align*}
	q_{\theta}(y \mid \bx, \Dcal_n)
	= \frac{\sum_{i = 1}^n \ind(Y_i = y) \ind(\| \bX_i - \bx\| < a_n\theta)}{\sum_{i = 1}^n \ind(\| \bX_i - \bx\| < a_n\theta)}.
\end{align*}
If $a_n$ increases with $n$, the width of the smoothing window does too.
This choice of $ q_{\theta}$ isn't sensible, of course, as the bias
\begin{align*}
	\Pr_0(Y = y \mid \| \bX - \bx\| < a_n\theta ) - p_0(y \mid \bx)
\end{align*}
typically increases with $n$.
If we instead choose $a_n \to 0$ and $p_0$ is sufficiently smooth, we can get rid of the bias.
The scaling $a_n = n^{-1/(4 + d)}$ is known to be optimal in an MSE sense \citep[e.g.,][]{wand1994kernel}, with squared bias and variance decreasing at rate $n^{-4/(4 + d)}$. Indeed, we have $a_n\Pr_0(\| \bX - \bx\| < a_n\theta) \to p_0(\bx)$, so this is the convergence rate implied by \cref{lem:generic-smoother} and \cref{thm:variance}.
The hyperparameter $\theta^*$ reduces to a prefactor, tuned to be (asymptotically) optimal for data sets generated from $\Pi$ of arbitrary size.

\subsection{Classification Trees} \label{sec:bma}

To keep the notation simple, suppose for the moment that $\Xcal \subseteq \R$. 
Define
\begin{align*}
	&\quad \; q_{\btheta}(y \mid x, \Dcal) \\
	&= \sum_{j = 0}^{S}  \ind_{(\theta_j, \theta_{j + 1}]}(x)\frac{\sum_{i = 1}^n  \ind(Y_i = y) \ind_{(\theta_j, \theta_{j + 1}]}(X_i)}{\sum_{i = 1}^n   \ind_{(\theta_j, \theta_{j + 1}]}(X_i)},
\end{align*}
as a classification tree with parameters $\btheta \in \R^{S}$ and $\theta_0 = -\infty, \theta_{S + 1} = +\infty$ by convention. The (hyper-)parameters are the split locations of the tree. The split locations are trained offline, to work best on sets from the prior $\Pi \times \Pi_N$. \cref{lem:generic-smoother} yields that \cref{thm:variance} holds with $\alpha = 1$. The bias is
\begin{align*}
     \sum_{j = 1}^{S}  \ind_{(\theta_j, \theta_{j + 1}]}(x)  \Pr_0(Y = y \mid X \in ( \theta_{j},  \theta_{j + 1}]) - p_0(y \mid x),
\end{align*}
which is independent of $n$. To reduce the bias as in the previous example, we would need to grow the number $S$ of split locations with $n$. But the model is considered fixed in the inference phase and we cannot change the number of parameters.


Instead, we could set up an ensemble of classification trees with Bayesian model averaging. Let $q_{\btheta_1}, \dots, q_{\btheta_K}$ be classification trees as above, and
\begin{align*}
	q_{\btheta}(y \mid x, \Dcal_n) = \frac{1}{K} \sum_{k = 1}^K q_{\btheta_k}(y \mid x, \Dcal_n) w(\Dcal_n; \btheta_k),
\end{align*}
where
\begin{align*}
	w(\Dcal_n; \btheta_k) = \frac{\exp\{-\mathrm{BIC}(\Dcal_n; \btheta_k)\} }{\sum_{j = 1}^K \exp\{ -\mathrm{BIC}(\Dcal_n; \btheta_j)\}},
\end{align*}
and
\begin{align*}
	\mathrm{BIC}(\Dcal_n; \btheta_k) = -2\sum_{i = 1}^n \log q_{\btheta_k}(Y_i \mid X_i, \Dcal_n) + S  \log n.
\end{align*}
As $n$ grows, the model $q_{\btheta}$ drifts towards a weighted average of the best-performing ensemble members. We can thus expect the bias to reduce with $n$, approaching the bias of the best ensemble members. The role of the hyperparameters $\btheta$ is the same as for a single tree. But now, the KL-optimal parameter $\btheta^*$ likely induces more complex and diverse ensemble members.

\subsection{Transformer Networks}  \label{sec:transformers}

We now consider a transformer network with one layer. Let $\Dcal_n = \{\bV_i\}_{i = 1}^n$ with $\bV_i = (Y_i, \bX_i) \in \{0, 1\} \times \R^{d}$ and $\bv = (0, \bx)$. Similar\footnote{Some scaling and redundancies in the parametrization have been deliberately removed. They help for training the network, but not its theoretical analysis.} to \citet{thickstun2021transformer}, define 
\begin{align*}
	a_{j}^{(h)} & = \softmax \bigl(\bv^\top W_{q}^{(h)} \bV_1, \dots, \bv^\top W_{q}^{(h)}  \bV_{n}\bigr)_j, \\
	\bu' & = \sum_{h = 1}^H \sum_{j = 1}^{n} a_{j}^{(h)} W_{v}^{(h)} \bV_j,                                                             \\
	\bu & =  \layernorm (\bv + \bu'; \bgamma),                                                  \\
	\bz' & =  W_{r, 2} \relu (W_{r, 1} \bu; \bgamma),                                                    \\
	\bz  & =  \layernorm (\bu + \bz'; \bgamma),                                                  \\
	q_{\btheta}(\cdot \mid \bx, \Dcal_n) & = \softmax(W_o \bz),
\end{align*}
where  $W_{q}^{(h)}, W_{v}^{(h)} \in \R^{(d + 1) \times (d + 1)}$, $W_{r, 1}, W_{r, 2}^\top \in \R^{m \times (d + 1)}$, and $ W_{o} \in \R^{|\Ycal| \times (d + 1)}$.
The parameter $\btheta$ collects all these matrices.
The $\softmax$, $\layernorm$, and $\relu$ operations are defined as
\begin{align*}
	\softmax(\bv)            & = \frac{\exp(\bv)}{\sum_{j=1}^d \exp(v_j)} ,  \\
	\layernorm(\bv; \bgamma) & = \gamma_1 \frac{\bv - \avg(\bv)}{\|\bv - \avg(\bv)\| + |\gamma_2|} + \gamma_3 , \\
     \avg(\bv)    &= \frac 1 d \sum_{j = 1}^d v_j  \bm 1,  \\
	\relu(\bv)               & = \max(0, \bv),
\end{align*}
with $\exp$ and $\max$ acting componentwise on vectors.
Here and in everything that follows, the norm $\| \cdot \|$ is understood as $\| \cdot \|_2$ for both vectors and matrices.

The first two equations describe an \emph{attention mechanism} with $H$ heads \citep{vaswani2017attention}. By definition, $a_1^{(h)} + \cdots + a_n^{(h)} = 1$. The idea is that within every head, the attention weights $a_j^{(h)}$ emphasize specific samples $\bV_j \in \Dcal_n$. Emphasis is put on those $\bV_j$ that are `similar' to $\bv$ in a sense measured by $\bv^\top W_{q}^{(h)} \bV_j$. Each attention head allows for a different definition of similarity.
With the help of \cref{thm:variance} and \cref{thm:transformer-bdiff}, we can show that the variance of this predictor vanishes.
\begin{theorem} \label{thm:transformer-variance}
	For $\Xcal = \{\bx\colon \|\bx\| \le K\}$ and $\|W_{q}^{(h)}\|, \|W_{v}^{(h)}\|,  \|W_{r, 1}\|, \|W_{r, 2}\|, \|W_o\| < \infty$, it holds
	\begin{align*}
		\bigl| q_{\btheta}(y \mid \bx, \Dcal_n) - \E[q_{\btheta}(y \mid \bx, \Dcal_n)] \bigr| \lesssim n^{-1/2},
	\end{align*}
	with high probability.
\end{theorem}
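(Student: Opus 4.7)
The plan is to reduce this to \cref{thm:variance} by establishing the bounded-differences condition \eqref{eq:bdiff} with exponent $\alpha = 1$; the already-referenced auxiliary result \cref{thm:transformer-bdiff} is precisely what does this, so the structural argument is essentially: propagate a one-sample perturbation through every layer and show it decays as $1/n$. Given that, \cref{thm:variance} immediately yields the stated high-probability bound $n^{1/2 - \alpha} = n^{-1/2}$.

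The only real content lies in verifying \eqref{eq:bdiff} with $\alpha = 1$. First I would fix two neighboring datasets $\Dcal_n$ and $\Dcal_n'$ differing only in the $k$-th sample, and track the effect on each intermediate quantity. Under the assumptions $\|\bx\| \le K$, $Y \in \{0,1\}$, and finite operator norms of $W_q^{(h)}, W_v^{(h)}, W_{r,1}, W_{r,2}, W_o$, every input vector $\bV_j$ lies in a compact set, and so do all pre-softmax logits $\bv^\top W_q^{(h)} \bV_j$. On a bounded logit domain the softmax is globally Lipschitz, and perturbing a single logit changes the corresponding attention weight $a_k^{(h)}$ by $O(1/n)$ while redistributing the remaining mass uniformly at the same order. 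Because $\bu' = \sum_{h,j} a_j^{(h)} W_v^{(h)} \bV_j$ is a convex combination (in $j$) of bounded vectors, its change is therefore $O(1/n)$.

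Next I would propagate this perturbation through the residual and normalization layers. $\layernorm(\cdot;\bgamma)$ is Lipschitz on a bounded set (the denominator is bounded away from zero by $|\gamma_2|$), and $\relu$ is $1$-Lipschitz; composition with the bounded linear maps $W_{r,1}, W_{r,2}, W_o$ preserves the $O(1/n)$ scaling. The final $\softmax$ is again Lipschitz on a bounded domain, so
\[
    \bigl|q_{\btheta}(y \mid \bx, \Dcal_n) - q_{\btheta}(y \mid \bx, \Dcal_n')\bigr| \le L/n,
\]
which is exactly \eqref{eq:bdiff} with $\alpha = 1$.

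The main obstacle I expect is the softmax step inside the attention mechanism: it is not globally Lipschitz, so the argument must exploit the compactness of $\Xcal$ together with the operator-norm bounds on $W_q^{(h)}$ to confine the logits to a bounded box. Once this uniform boundedness is in place, the rest of the argument is the straightforward Lipschitz-composition chain sketched above, and \cref{thm:variance} closes the proof.
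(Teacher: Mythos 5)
Your proposal is correct and follows essentially the same route as the paper: establish the bounded-differences condition \eqref{eq:bdiff} with $\alpha = 1$ (which is exactly \cref{thm:transformer-bdiff}) by propagating a one-sample perturbation through the attention, layer-norm, ReLU, and final softmax layers, then invoke \cref{thm:variance}. The paper's proof simply fills in the quantitative details you sketch—explicit $e^8/n$ bounds on the attention-weight perturbations and a separate lemma (\cref{lem:layernorm}) giving the $4|\gamma_1|/|\gamma_2|$ Lipschitz constant for $\layernorm$—but the decomposition and the key observation that the attention weights are individually $O(1/n)$ are the same.
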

The variance vanishes irrespective of the parameter $\btheta$. This is because the attention mechanism necessarily gives diminishing weight to individual samples (see the proof in \cref{sec:transformer-proof}).

The bias depends heavily on the choice of $\btheta$. 
\begin{theorem} \label{thm:transformer-bias}
	Under the assumptions of \cref{thm:transformer-variance},
	\begin{align*}
		 \E[q_{\btheta}(\cdot \mid \bx, \Dcal_n)] \stackrel{n \to \infty}\rightarrow \bar q_{\theta}(\cdot \mid \bx),
	\end{align*}
	where $\bar q_{\theta}(\cdot \mid \bx)$ is defined as $q_{\btheta}(\cdot \mid \bx, \Dcal_n)$, but with $\bu'$ replaced by 
	\begin{align*}
		\bar \bu' & = \sum_{h = 1}^H  W_{v}^{(h)} \E_{\bV \sim g_h}[ \bV],
		\\   \text{where} \quad g_h(\bs) &= \frac{\exp(\bv^\top W_{q}^{(h)} \bs)}{\E_{\bV \sim p_0}[\exp(\bv^\top W_{q}^{(h)} \bV)]} p_0(\bs).
	\end{align*}
\end{theorem}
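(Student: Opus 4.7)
The plan is to reduce the claim to a law of large numbers at the attention layer, propagate the resulting almost sure limit through the remaining (continuous) operations of the network, and then upgrade the almost sure convergence to convergence in expectation by bounded convergence. All three steps are fairly routine; the interesting computation is identifying the correct SLLN ratio at the attention layer.

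For the first step, rewrite the $h$-th head's contribution as a ratio of sample averages,
\begin{align*}
	\sum_{j=1}^n a_{j}^{(h)} W_{v}^{(h)} \bV_j = W_{v}^{(h)} \cdot \frac{n^{-1}\sum_{j=1}^n \exp(\bv^\top W_{q}^{(h)} \bV_j)\,\bV_j}{n^{-1}\sum_{k=1}^n \exp(\bv^\top W_{q}^{(h)} \bV_k)}.
\end{align*}
Since $\Xcal$ is bounded and $\|W_{q}^{(h)}\| < \infty$, the scalar $\exp(\bv^\top W_{q}^{(h)} \bV_j)$ is uniformly bounded, so both numerator and denominator have all moments finite. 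The strong law of large numbers gives almost sure convergence of the numerator to $\E_{\bV \sim p_0}[\exp(\bv^\top W_{q}^{(h)} \bV)\,\bV]$ and of the (strictly positive) denominator to $\E_{\bV \sim p_0}[\exp(\bv^\top W_{q}^{(h)} \bV)]$; the continuous mapping theorem together with the definition of $g_h$ then yields that this ratio converges a.s.\ to $\E_{\bV \sim g_h}[\bV]$. Summing over the $H$ heads gives $\bu' \to \bar \bu'$ almost surely.

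For the second step, the remaining operations---residual addition of $\bv$, LayerNorm, the affine-ReLU-affine feed-forward block, another residual LayerNorm, the output linear map $W_o$, and the final softmax---are continuous functions of $\bu'$ (LayerNorm is globally Lipschitz with constant of order $|\gamma_1|/|\gamma_2|$, and softmax, ReLU, and linear maps are continuous everywhere), so a second application of the continuous mapping theorem yields $q_{\btheta}(\cdot \mid \bx, \Dcal_n) \to \bar q_{\btheta}(\cdot \mid \bx)$ almost surely. For the third step, each coordinate of $q_{\btheta}$ lies in $[0,1]$ because of the final softmax, so bounded convergence transfers the almost sure limit into the stated limit of expectations. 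The only potential snag is LayerNorm continuity when $\gamma_2 = 0$, but under the implicit assumption $\gamma_2 \neq 0$ (tacitly needed for LayerNorm to be defined at constant inputs) this is harmless; otherwise one notes that the singular set is avoided with probability one.
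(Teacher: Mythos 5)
Your proof is correct and follows essentially the same path as the paper's: rewrite each attention head as a ratio of sample averages, apply the strong law of large numbers and continuous mapping to get $\bu' \to \bar\bu'$ almost surely, propagate through the remaining continuous layers, and upgrade to convergence of expectations by bounded convergence. Your explicit note on the $\gamma_2 \neq 0$ caveat for LayerNorm continuity is a small but welcome addition that the paper leaves implicit.
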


\begin{figure*}
	\centering 
	\fig[0.9]{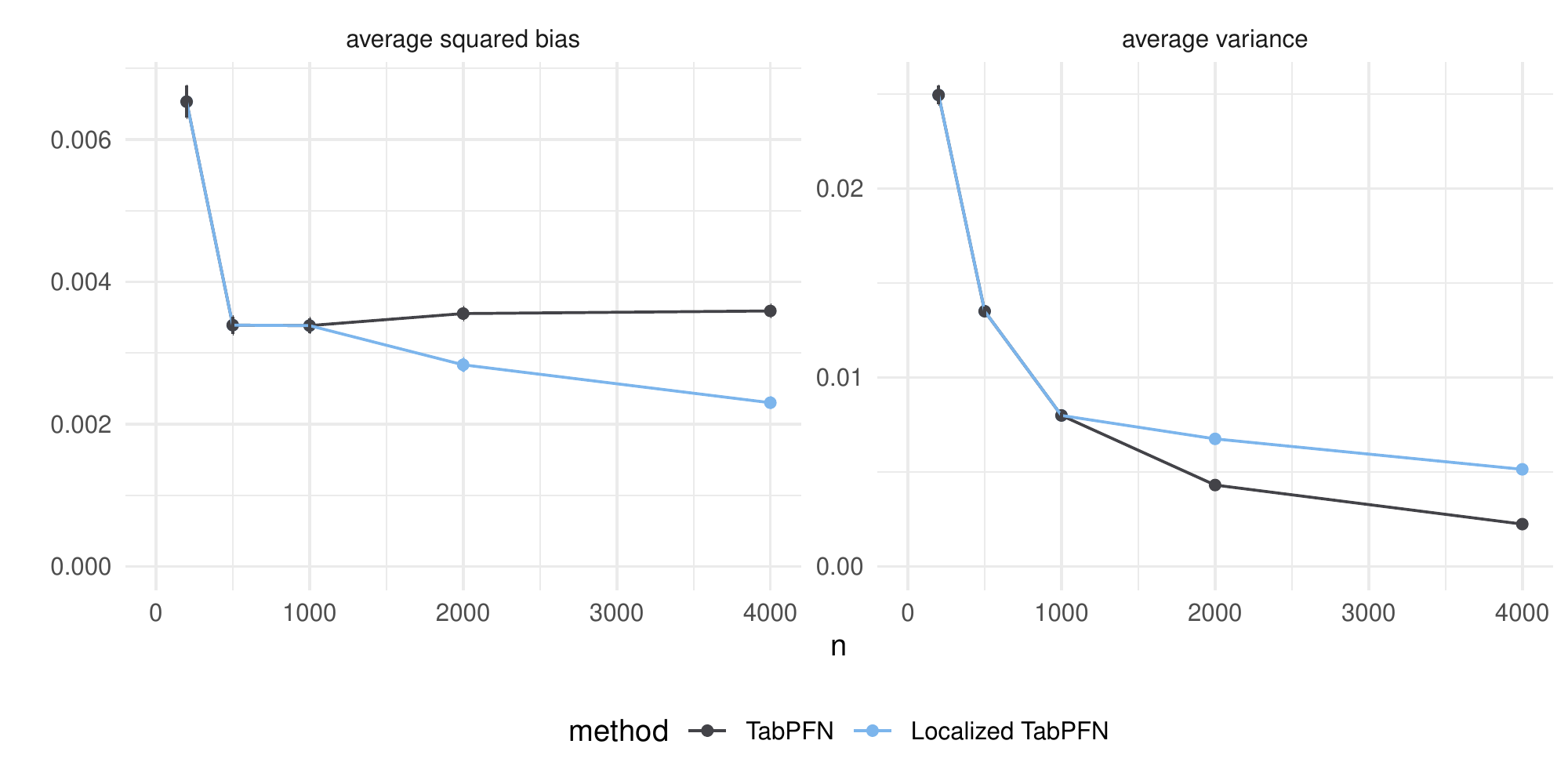}
	\caption{Average squared bias and variance of the pre-trained TabPFN of \citet{hollmann2022tabpfn} on simulated data sets.}
	\label{fig:bias-variance}
\end{figure*}

The measure $g_h$ is an exponentially tilted version of $p_0$ \citep{Siegmund76}.
The tilt can be understood as an infinitesimal form of the attention mechanism.
Relative to $p_0$, the tilted measure lifts the likelihood of values $\bs$ that are similar to $\bv$ (with similarity measured by $\bv^\top W_{q}^{(h)} \bs$) and discounts the likelihood of others. Each attention head $h$ assesses a certain aspect of the unknown distribution $p_0$ (characterized by $W_{q}^{(h)}$). If the matrices $(W_{q}^{(h)})_{h = 1}^H$ are specified well, the aspect views distinguish distinct feature values. This localizes the predictor to some degree, but not in the sense of \cref{thm:unbiased}. Although we upweight the influence of samples ``similar to'' $\bv$, there always remains an influence of samples away from $\bv$. We cannot flip the labels of such samples without changing the predictor (asymptotically), so we should not expect the bias to vanish.

Nevertheless, the limiting bias $\bar q_{\theta}(y \mid  \bx) - p_0(y \mid \bx)$ may be small if the remaining network processes \emph{the sum} of aspect summaries $W_{v}^{(h)} \E_{\bV \sim g_h}[\bV]$ into a good approximation of $p_0$. The relevance of individual aspect summaries depends on the true measure $p_0$, and less relevant aspects may also contribute less to the sum. On small samples, this effect is milder. At the extreme end, $n = 1$, all attention weights $a_j^{(h)}$ equal 1, so all aspects contribute equally. This suggests that the bias of the transformer network may decrease --- provided the hyperparameters downstream make meaningful use of the aspect views. For example, \citet{olsson2022context} identified powerful patterns of several attention heads working together. This effect is similar to that of the model averaging layer in \cref{sec:bma}. Key to this is the presence of multiple attention heads ($H > 1$). However, this applies only to sample sizes the parameter $\btheta$ has been tuned to. For larger sample sizes, there is no reason to expect a tuned network's bias to decrease.

\subsection{Localized PFNs} \label{sec:localizedPFN}

According to \cref{thm:unbiased}, we need to localize the network to make its bias decrease. A simple post-hoc approach applicable to any pre-trained network is the following. To predict the label at a new feature $\bx$:
\begin{enumerate}
	\item Construct a reduced training set $\Dcal_n(\bx)$ by excluding all but the $k_n$ nearest neighbors of $\bx$ from $\Dcal_n$.
	\item Predict the label that maximizes $q_{\btheta}(\cdot \mid \bx, \Dcal_n(\bx))$.
\end{enumerate}
Intuitively, restricting to a neighborhood is like stretching/flattening the target $p(y \mid \cdot)$ at the cost of a reduction in sample size. Flatter functions are easier to approximate. This is the mechanism behind the window smoother from \cref{sec:window}. If the model $q_\theta$ approximates constant functions well, localization should improve the bias. 



\subsection{Numerical Validation}

Since the key mechanism acting on $\Dcal_n$ remains intact if we add more layers to the network, the findings likely transfer to larger networks. The main predictions from our theoretical considerations are: $(i)$ the variance vanishes at rate $1/n$, $(ii)$ the bias does not vanish, but decreases until $n \approx 1000$. To confirm this empirically, we simulate 500 data sets $\Dcal_n$ from the model $p_0(1 \mid \bX) = 1/2 + \sin(\bm 1^\top \bX) / 2$ with $Y \in \{0, 1\}$, $\bX \sim \Ncal( \bnull, I_5)$, and run the pre-trained TabPFN of \citet[pip version 0.1.8]{hollmann2022tabpfn}.\footnote{An R script to reproduce the results can be found at 
\url{https://gist.github.com/tnagler/62f6ce1f996333c799c81f1aef147e72}.} We compute the average squared bias and variance over 100 samples $\bX_{\text{test}} \sim \Ncal(\bnull, I_5)$. The results in \Cref{fig:bias-variance} confirm that the variance indeed decreases at rate $1/n$ and that the bias decreases until $n \approx 1000$, but does not vanish. 

The analysis shows that, for larger sample sizes at inference, TabPFN learns mainly through decreasing its variance. This variance reduction is a consequence of the transformer architecture and takes place irrespective of the tuned parameters $\wh \btheta$. \Cref{fig:bias-variance} also shows the results of a localized version of TabPFN (as in \cref{sec:localizedPFN} with $k_n = \min\{500, \lceil n^{4/(d + 4)} \rceil\}$). Here, the bias continues to decrease beyond $n = 1000$ at the cost of a slightly larger variance.

\section{Discussion} \label{sec:discussion}

As explained in \cref{sec:predictor}, the prior $\Pi$ characterizes tasks we want the predictor to do well on. \citet{hollmann2022tabpfn} propose a new kind of prior based on structural causal models \citep{pearl2009causality} that is interesting on its own. Their intuitive idea is that the pair $(Y, \bX)$ is generated by some noisy, causal mechanism (not necessarily in the direction $\bX \to Y$). Because the mechanisms can be arbitrarily complex, the prior is essentially nonparametric. The rate at which corresponding posteriors contract is a complex issue \citep[Chapters 8--9]{ghosal2017fundamentals} and poses an interesting open question.

The key factors driving PFNs capability to learn are their sensitivity to individual samples, their ability to choose submodels, and localization. These insights may help to inform architecture design. In \cref{sec:window} and \cref{sec:localizedPFN}, we found a way to make the bias vanish at the cost of increased sensitivity to the training instances. This was achieved by introducing a scaling of the tuning parameters adapted to the training set size $n$. Whether this is possible and what a good scaling is depends on the model architecture. The localization approach in \cref{sec:localizedPFN} is simple and can be applied post-hoc to any pre-trained network $q_\theta$. More serious architecture design should account for the entire training pipeline and computational efficiency. Additional improvements can be expected if localization is incorporated into pre-training. 
Thinking about ways to adapt the transformer architecture appropriately could be a promising path. Another possible improvement is to augment the architecture with a Bayesian averaging mechanism similar to \cref{sec:bma}.

\citet{hollmann2022tabpfn} acknowledge constraints on the feature dimension and sample size as a major limitation of current PFN implementations. Owing to the standard transformer architecture, the maximal feature size is fixed, and the algorithm scales quadratically in the number of samples. To mitigate this, several works proposed scalable modifications of the transformer architecture \citep{beltagy2020longformer,zaheer2020big,kitaev2020reformer}.
\citet{hollmann2022tabpfn} rightfully point out that PFNs are quick enough to be used as ensemble members. The size constraints could therefore be overcome by boosting and bagging techniques akin to random forests or boosted trees.  

The full potential of PFNs is yet to be explored.

\section*{Acknowledgements}

The author is grateful for several helpful comments by Samuel M\"uller, Noah Hollmann, Thibault Vatter, and two anonymous referees.

\bibliography{ref}
\bibliographystyle{icml2023}

\newpage
\appendix
\onecolumn
\appendix

\section{Proofs} \label{sec:proofs}

\subsection{Proof of \texorpdfstring{\Cref{thm:idea}}{Theorem \ref{thm:idea}}}

By definition of the KL-divergence,
\begin{align*}
	0 \le \mathrm{KL}\bigl[q(\cdot \mid \bx, \Dcal)  \;||\; \pi(\cdot \mid \bx, \Dcal) \bigr] \quad \text{for all } (\bx, \Dcal),
\end{align*}
or, equivalently,
\begin{align*}
	\E_{Y \sim \pi(\cdot \mid \bx, \Dcal)}[\log \pi(Y \mid \bx, \Dcal)] \ge \E_{Y \sim \pi(\cdot \mid \bx, \Dcal)}[\log q(Y \mid \bx, \Dcal)]  \quad \text{for all } (\bx, \Dcal).
\end{align*}
Since this holds for any $(\bx, \Dcal)$ it must also hold if we take expectations over random draws of $(\bx, \Dcal)$. Taking expectation with respect to $(\bx, \Dcal) \sim \Pi$ on both sides, the law of iterated expectations yields
\begin{align*}
	\E_\Pi [\log \pi(Y \mid \bX, \Dcal)] \ge \E_\Pi [\log q(Y \mid \bX, \Dcal)]. \tag*{\qedsymbol}
\end{align*}

\subsection{Proof of \texorpdfstring{\Cref{thm:posterior-consistency}}{Theorem \ref{thm:posterior-consistency}}}
\label{apx:ppd}

Write $\KL(f \mid f')$ 
for the KL-divergence of $f$ relative to $f'$, and $H(f, f')$ 
for their Hellinger distance. We need the following assumptions:
\begin{enumerate}[label=(A\arabic*)]
	\item \label{a:unique_phi}
	      There is a unique $p^* \in \Pcal$ with $p^* = \arg\min_{p} \KL(p^* \mid p_0)$ and $\KL(p^* \mid p_0) < \infty$.
	\item \label{a:covering}
	      For every $\alpha \in (0, 1/2)$, there are sets $B_1, \dots, B_{J(\alpha)}$ with
	      \begin{align*}
		      \Pcal \subseteq \bigcup_{j = 1}^{J(\alpha)}  B_j, \qquad \sup_{p, p' \in B_{j}}  H(p, p') \le 4(\alpha^2/2)^{1/\alpha}, \qquad \sum_{j = 1}^{J(\epsilon)} \Pi(B_j)^\alpha < \infty.
	      \end{align*}
\end{enumerate}

Now let $\Pi_n(A) = \int_A d\Pi(p \mid \Dcal_n)$ be the posterior measure.
From our assumptions and  Corollary 1 of \citet{de2013bayesian} it follows that for all $\epsilon > 0$,
\begin{align} \label{eq:deblasi}
	\Pi_n\bigl\{p \in \Pcal\colon H(p, p^*) > \epsilon\bigr\} \to 0,
\end{align}
with probability 1 over sequences $\Dcal_n$.
For some $\delta > 0$ and an arbitrary set $A \subseteq \Ycal \times \Xcal$ with $\mu_A = P^*(A) > 0$, define
\begin{align*}
	S_{\delta, A} = \biggl\{p \in \Pcal \colon \inf_{(y, \bx) \in A} \bigl|p(y \mid \bx) - p^*(y \mid \bx) \bigr| \ge \delta\biggr\}.
\end{align*}
For any $p \in S_{\delta, A}$, it holds
\begin{align*}
	\mu_A \delta = \int_A \delta p^*(\bx) d\bx
	 & \le  \int_A	|p^*(y \mid \bx) - p(y \mid \bx)| p^*(\bx) d\bx                     \\
	 & =  \int_A	\bigl|p^*(y, \bx) - p(y \mid \bx)p^*(\bx)\bigr| d\bx                    \\
	 & =  \int_A	\bigl|p^*(y, \bx) - p(y, \bx) + p(y \mid \bx)[p(\bx) - p^*(\bx)]\bigr| d\bx \\
	 & \le  \int	\bigl|p^*(y, \bx) - p(y, \bx)| d\bx + \int \bigl|p(\bx) - p^*(\bx)\bigr| d\bx \\
	 & \le 4 \operatorname{TV}(p^*, p)                                            \\
	 & \le 8 H(p^*, p),
\end{align*}
where $\operatorname{TV}(f, f')$ 
is the total variation distance.
Together with \eqref{eq:deblasi}, this implies
\begin{align*}
	\Pi_n(S_{\delta, A}) \le \Pi_n\bigl\{ p \colon H(p, p^*) \ge \mu_A \delta / 8 \bigr\} \to 0.
\end{align*}
We then get
\begin{align*}
	\inf_{(y, \bx) \in A} |\pi(y \mid \bx, \Dcal_n) - p_0(y \mid \bx)|
	 & \le  \inf_{(y, \bx) \in A} \biggl|\int p(y \mid \bx) d\Pi_n(p) - p_0(y \mid \bx)\biggr|                           \\
	 & \le  \inf_{(y, \bx) \in A} \int  \bigl|p(y \mid \bx) - p_0(y \mid \bx) \bigr|   d\Pi_n(p)                         \\
	 & \le  \inf_{(y, \bx) \in A} \int_{S_{\delta, A}} \bigl|p(y \mid \bx) - p_0(y \mid \bx) \bigr|   d\Pi_n(p) + \delta \\
	 & \le 2\Pi_n(S_{\delta, A}) + \delta \to \delta \quad \text{almost surely}.
\end{align*}
Since $\delta$ and $A$ were arbitrary, we have shown that
\begin{align*}
	\pi(y\mid \bx, \Dcal_n)  \to p*(y \mid \bx),
\end{align*}
with probability 1 for $P^*$-almost every $(y, \bx)$. Since $\KL(p^* \mid p_0) < \infty$  by \ref{a:unique_phi}, convergence must also take place for $P_0$-almost every $(y, \bx)$. \hfill \qedsymbol

\subsection{Proof of \texorpdfstring{\Cref{thm:symmetry}}{Theorem \ref{thm:symmetry}}}

Let $R_n$ be the set of permutations $\rho\colon (\Ycal \times \Xcal)^n \to  (\Ycal \times \Xcal)^n$. Define the symmetrized function $\wt f$ as
$$\wt f(\Dcal_n) = \frac 1 {|R_n|} \sum_{\rho \in R_n} (f \circ \rho)(\Dcal_n).$$
If the elements in $\Dcal_n$ are \emph{iid}, it holds  $\E_P[f(\Dcal_n)] = \E_P[(f \circ \rho)(\Dcal_n)]$ and  $\var_P[f(\Dcal_n)] = \var_P[(f \circ \rho)(\Dcal_n)] $ for any $\rho \in R_n$.
Therefore, $\E_P[f(\Dcal_n)] = \E_P[\wt f(\Dcal_n)]$ and
\begin{align*}
	\var_P[\wt f(\Dcal_n)]
	 & = \frac{\var_P[f(\Dcal_n)]}{|R_n|^2} \sum_{\rho, \rho' \in R_n} \corr_P[(f \circ \rho)(\Dcal_n), (f \circ \rho')(\Dcal_n)]  		\le \var_P[f(\Dcal_n)],
\end{align*}
with equality for all $P$ if and only if $f$ is symmetric. \hfill \qedsymbol

\subsection{Proof of \texorpdfstring{\Cref{thm:variance}}{Theorem \ref{thm:variance}}}
\label{sec:variance-proof}

McDiarmid's inequality \citep{mcdiarmid_1989} yields
\begin{align*}
\Pr\bigl(	\bigl| q_{\btheta}(y \mid \bx, \Dcal_n) - \E[q_{\btheta}(y \mid \bx, \Dcal_n)] \bigr| > \epsilon\bigr) \le 2\exp\biggl(-\frac{2 \epsilon^2}{L^2 n^{1 - 2\alpha}}\biggr).
\end{align*}
Choosing $\epsilon^2 = \log(\delta)L^2 n^{1 - 2\alpha} / 2$, we get
\begin{align*}
	\Pr\bigl(	\bigl| q_{\btheta}(y \mid \bx, \Dcal_n) - \E[q_{\btheta}(y \mid \bx, \Dcal_n)] \bigr| > \log(\delta)^{1/2}K L n^{1/2 - \alpha} / \sqrt{2} \bigr) \le 2\delta. \tag*{\qedsymbol}
\end{align*}

\subsection{Proof of \texorpdfstring{\Cref{thm:variance-as}}{Lemma \ref{thm:variance-as}}}
\label{sec:variance-as-proof}
Using the first inequality from the previous proof, we get
\begin{align*}
	\sum_{n = 1}^\infty \Pr\bigl(	\bigl| q_{\btheta}(y \mid \bx, \Dcal_n) - \E[q_{\btheta}(y \mid \bx, \Dcal_n)] \bigr| > \epsilon\bigr) \le 2 	\sum_{n = 1}^\infty \exp\biggl(-\frac{2 \epsilon^2}{L^2 n^{1 - 2\alpha}}\biggr) < \infty.
\end{align*}
The Borel-Cantelli lemma then implies that, almost surely,
\begin{align*}
	\lim_{n \to \infty} \bigl| q_{\btheta}(y \mid \bx, \Dcal_n) - \E[q_{\btheta}(y \mid \bx, \Dcal_n)] \bigr| \le  \epsilon
\end{align*}
Since $\epsilon$ was arbitrary the claim follows. \hfill \qedsymbol

\subsection{Proof of \texorpdfstring{\Cref{thm:unbiased}}{Theorem \ref{thm:unbiased}}}
Condition \eqref{eq:unbiased} implies 
\begin{align*}
	 \E_p[q_{\btheta}(y \mid \bx, \Dcal_n)] - 	 \E_{\wt p}[q_{\btheta}(y \mid \bx, \Dcal_n)]   \to 0,
\end{align*}
for all $p, \wt p \in \Pcal$ with $p(y \mid \bs) = \wt p(y \mid \bs)$ for $\| \bs - \bx\| < \epsilon$. \cref{thm:variance-as} then implies 
\begin{align*}
	\lim_{n \to \infty}\bigl| q_{\btheta}(y \mid \bx, \Dcal_n) - q_{\btheta}(y \mid \bx, \wt \Dcal_n) \bigr|  = 0 \quad \text{almost surely}.
\end{align*}
Since $\epsilon$ was arbitrary, convergence must also hold for some sequence $\epsilon_n \to 0$. \hfill \qedsymbol

\subsection{Proof of \texorpdfstring{\Cref{lem:generic-smoother}}{Lemma \ref{lem:generic-smoother}}}

Let $\Dcal_n = (Y_i, \bX_i)_{i = 1}^n$ and $\Dcal_n' = (Y_i', \bX_i')_{i = 1}^n$ such that 
$(Y_i, \bX_i) = (Y_i', \bX_i')$ for all $i > 1$.
Hoeffdings's inequality \citep[][Theorem 2.8]{boucheron2013concentration} gives
\begin{align*}
	\Pr\biggl(\biggl| \frac 1 n \sum_{i = 1}^n  \ind\{\bX_i \in A_n(\bx)\} -\Pr\{\bX_i \in A_n(\bx)\} \biggr| >  c n^{-\eta} / 3 \biggr) &\le 2\exp\bigl(-c^2 n^{1-2\eta} / 9\bigr).
\end{align*}
Since $\eta < 1/2$, 
\begin{align*}
	\sum_{n = 1}^\infty \exp\bigl(-c^2 n^{1-2\eta} / 9\bigr) < \infty,
\end{align*}
and the Borell-Cantelli lemma implies that for large $n$,
\begin{align*}
	\biggl| \frac 1 n \sum_{i = 1}^n  \ind\{\bX_i \in A_n(\bx)\} -\Pr\{\bX_i \in A_n(\bx)\} \biggr| \le  c n^{-\eta} / 3  \quad \text{almost surely}.
\end{align*}
The remaining inequalities are understood almost surely, for large enough $n$.
Because $n^\eta\Pr\{\bX_i \in A_n(\bx)\} \to c$, we get
\begin{align*}
	&\quad n^\eta \biggl| \frac 1 n \sum_{i = 1}^n  \ind\{\bX_i \in A_n(\bx)\} - c n^{-\eta} \biggr| \\ 
	&\le n^\eta \biggl| \frac 1 n \sum_{i = 1}^n  \ind\{\bX_i \in A_n(\bx)\} - \Pr\{\bX_i \in A_n(\bx)\} \biggr| + \biggl|	n^\eta\Pr\{\bX_i \in A_n(\bx)\} - c \biggr| \\
	&\le c / 2 ,
\end{align*}
which implies
\begin{align*}
\frac 1 n \sum_{i = 1}^n  \ind\{\bX_i \in A_n(\bx)\} \ge cn^{-\eta}/2.
\end{align*}
Using this bound, 
\begin{align*}
\quad n^{1- \eta}|q_{\theta}(y \mid \bx, \Dcal_n) - q_{\theta}(y \mid \bx, \Dcal_n')| 
\le 2 |g(y, \bx, Y_1, \bX_1) - g(y, \bx, Y_1', \bX_1')| / c \le 4K / c,
\end{align*}
which proves the claim. \hfill \qedhere

\subsection{Proof of \texorpdfstring{\Cref{thm:transformer-variance}}{Theorem \ref{thm:transformer-variance}}}
\label{sec:transformer-proof}

The theorem is a consequence of \cref{thm:variance} and the following result.   (The norm bounds on $\|\bx\|$ and the weight matrices are arbitrary and can be relaxed.)
\begin{theorem} \label{thm:transformer-bdiff}
	Let $\Xcal = \{\bx\colon \|\bx\| \le 1\}$ and $\|W_{q}^{(h)}\|, \|W_{v}^{(h)}\|,  \|W_{r, 1}\|, \|W_{r, 2}\|, \|W_o\| \le 1$.
	Then the network $q_{\btheta}$ satisfies \eqref{eq:bdiff} with $\alpha = 1$ and $L = O(H|\gamma_1| / |\gamma_2|)$.
\end{theorem}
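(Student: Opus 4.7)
The plan is to trace how a one-sample perturbation $\bV_k \mapsto \bV_k'$ propagates through each layer of the transformer, accumulating the layer-wise Lipschitz constants. The decisive $1/n$ factor will come entirely from the softmax attention; the remaining operations contribute only bounded multiplicative factors depending on $H$, $|\gamma_1|$, and $|\gamma_2|$.

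First I would analyze the attention weights. Writing $e_j^{(h)} = \exp(\bv^\top W_q^{(h)} \bV_j)$ and $S_h = \sum_i e_i^{(h)}$, the norm bounds on $\bv$, $\bV_j$, and $W_q^{(h)}$ imply that all inner products $\bv^\top W_q^{(h)} \bV_j$ are bounded, so each $e_j^{(h)}$ lies in a fixed interval $[c_1, c_2]$ and consequently $S_h \asymp n$. A direct computation of $a_j^{(h)} - a_j'^{(h)}$, splitting by whether $j = k$ or $j \ne k$, then yields
\begin{align*}
 |a_j^{(h)} - a_j'^{(h)}| = O(1/n^2) \text{ for } j \ne k, \qquad |a_k^{(h)} - a_k'^{(h)}| = O(1/n).
\end{align*}

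Next, decomposing $\bu' - \bu'^{(\text{new})}$ as
\begin{align*}
\sum_{h = 1}^H \biggl[\sum_{j \ne k} \bigl(a_j^{(h)} - a_j'^{(h)}\bigr) W_v^{(h)} \bV_j + a_k^{(h)} W_v^{(h)} \bV_k - a_k'^{(h)} W_v^{(h)} \bV_k'\biggr]
\end{align*}
and using $\|W_v^{(h)} \bV_j\| = O(1)$, the bracket contributes $(n-1) \cdot O(1/n^2) + O(1/n) = O(1/n)$, so $\|\bu' - \bu'^{(\text{new})}\| = O(H/n)$.

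Finally, I would propagate this perturbation through the two LayerNorms, the feed-forward sandwich $W_{r,2}\relu(W_{r,1}\,\cdot)$, the output map $W_o$, and the softmax. The key auxiliary fact is that $\layernorm(\cdot;\bgamma)$ is Lipschitz with constant $O(|\gamma_1|/|\gamma_2|)$: writing the centered input as $\bw$, the Jacobian of $\bw/(\|\bw\|+|\gamma_2|)$ has spectral norm at most $1/|\gamma_2|$, the centering map is $1$-Lipschitz, and $\gamma_1$ scales the output. Combining this with the $1$-Lipschitzness of ReLU and softmax together with the operator-norm bounds on the weight matrices yields \eqref{eq:bdiff} with $\alpha = 1$ and $L = O(H|\gamma_1|/|\gamma_2|)$, after absorbing further polynomial factors in $|\gamma_1|/|\gamma_2|$ into the big-$O$ constant.

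The main obstacle is the attention step: a naive application of the softmax Lipschitz bound gives only an $O(1)$ estimate, and the $1/n$ rate emerges only once the linear growth of $S_h$ in $n$ is exploited, together with the cancellations in the closed-form differences of $a_j^{(h)} - a_j'^{(h)}$. Once this is in hand, the remaining steps amount to a routine chain of Lipschitz estimates through deterministic operations, with the LayerNorm computation being the only nontrivial ingredient.
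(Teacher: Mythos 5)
Your proposal takes essentially the same route as the paper's proof: bound the attention perturbation by exploiting the $\Theta(n)$ growth of the softmax normalizer to get the $O(1/n)$ one-sample sensitivity, then propagate through the downstream LayerNorm/ReLU/linear/softmax layers using operator-norm Lipschitz bounds, with LayerNorm contributing the $O(|\gamma_1|/|\gamma_2|)$ factor. The only cosmetic difference is in the LayerNorm Lipschitz argument — the paper proves it by elementary algebraic manipulation (their Lemma on $\layernorm$, giving constant $4|\gamma_1|/|\gamma_2|$), while you argue via a Jacobian spectral-norm bound and observe that centering is a $1$-Lipschitz orthogonal projection; both are correct and yield the same order.
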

\begin{proof}
	Let $\wt \Dcal_n = (\Dcal_n \setminus \bV_n) \cup \wt \bV_n$ and define $\wt a_j^{(h)}$, $\wt \bu'$, $\wt \bu$, $\wt \bz'$, $\wt \bz$ accordingly.
	Because $\softmax$ is 1-Lipschitz \citep[Proposition 4]{gao2017properties},
	\begin{align*}
		|q_{\btheta}(y \mid \bx, \Dcal_n) - q_{\btheta}(y \mid \bx, \wt \Dcal_n)| & \le \| W_o (\bz - \wt \bz)\|
		\le \| \bz - \wt \bz\|.
	\end{align*}
	Using \cref{lem:layernorm} below, we further get
	\begin{align*}
		\bigl\| \bz - \wt \bz\bigr\| & \le \bigl\| \layernorm(\bu + \bz') - \layernorm(\wt \bu' + \wt \bz)\bigr\|  \le 4 \frac{|\gamma_1|}{|\gamma_2|}  \biggl(\bigl\|\bu - \wt \bu\bigr\|  + \bigl\|\bz'- \wt \bz'\bigr\| \biggr).
	\end{align*}
	Because also $\relu$ is 1-Lipschitz,
	\begin{align*}
		\bigl\|\bz'- \wt \bz'\bigr\| & = \bigl\|W_{r, 2}\relu(W_{r, 1}\bu)- W_{r, 2}\relu(W_{r, 1}\wt \bu)\bigr\|  \le \|\bu - \wt \bu\|.
	\end{align*}
	Using \cref{lem:layernorm} again,
	\begin{align*}
		\| \bu - \wt \bu\|
		 & = \| \layernorm(\bv + \bu') - \layernorm(\bv + \wt \bu')\| \le 4\frac{|\gamma_1|}{|\gamma_2|} \| \bu' - \wt  \bu'\|.  
	\end{align*}
	The last displays together yield
	\begin{align} \label{eq:tv1}
		|q_{\btheta}(y \mid \bx, \Dcal_n) - q_{\btheta}(y \mid \bx, \wt \Dcal_n)| \le 32 \frac{|\gamma_1|}{|\gamma_2|} \bigl\|\bu - \wt \bu\bigr\|.
	\end{align}
	Defining $\wt \bV_i = \bV_i$ for $i < n$, we obtain 
	\begin{align*}
		 \frac 1 H \|\bu' - \wt \bu'\| 
		& = \frac 1 H  \biggl\| \sum_{h = 1}^H \biggl[ \sum_{j = 1}^{n} a_j^{(h)}W_{v}^{(h)} \bV_j - \sum_{j = 1}^{n} \wt a_j^{(h)}W_{v}^{(h)} \wt \bV_j \biggr] \biggr\|                               \\
		& \le   \max_{1 \le h \le H} \biggl\|  \sum_{j = 1}^{n} a_j^{(h)}W_{v}^{(h)} \bV_j - \sum_{j = 1}^{n} \wt a_j^{(h)}W_{v}^{(h)} \wt \bV_j \biggr\| \\
		& = \max_{1 \le h \le H} \biggl\|   \sum_{j = 1}^{n} a_j^{(h)}W_{v}^{(h)} (\bV_j - \wt \bV_j) + \sum_{j = 1}^{n} (a_j^{(h)} - \wt a_{j}) W_{v}^{(h)} \wt \bV_j \biggr\|      \\
		& \le  \max_{1 \le h \le H}\sum_{j = 1}^{n} a_j^{(h)}\|W_{v}^{(h)}\| \|\bV_j - \wt \bV_j\| + \sum_{j = 1}^{n} |\wt a_j^{(h)}-  a_{j}| \| W_{v}^{(h)}\| \| \wt \bV_j\| \\
		& \le 4 \max_{1 \le h \le H}  a_n^{(h)}   + 2 \sum_{j = 1}^{n} |\wt a_j^{(h)}-  a_{j}|.
	\end{align*}
	Let $s_{i} = \bv^\top W_{q}^{(h)} \bV_i$ and note that $|s_i| \le \|\bv\| \|W_{q}^{(h)}\| \max_j\|\bV_j\| \le 4$.
	\begin{align*}
		|a_n^{(h)}| = \frac{\exp(s_n) }{\sum_{j = 1}^n \exp(s_j)} \le \frac{e^4}{\sum_{j = 1}^n e^{-4}} = \frac{e^8}{n}.
	\end{align*}
	Further,
	\begin{align*}
		|\wt a_j^{(h)} - a_j^{(h)}| & =  \biggl| \frac{\exp(s_j) }{\sum_{j = 1}^n \exp(s_j)} - \frac{\exp(\wt s_j) }{\sum_{j = 1}^{n} \exp(\wt s_j)} \biggr|                                                                                                                           \\
		                & =  \biggl| \frac{\exp(s_j) }{\sum_{j = 1}^n \exp(s_j)} - \frac{\exp(\wt s_j) }{\sum_{j = 1}^n \exp(s_j)} + \frac{\exp(\wt s_j) }{\sum_{j = 1}^n \exp(s_j)} - \frac{\exp(\wt s_j) }{\sum_{j = 1}^{n} \exp(\wt s_j)} \biggr|                       \\
		                & \le  \biggl| \frac{\exp(s_j) - \exp(\wt s_j) }{\sum_{j = 1}^n \exp(s_j)}  \biggr| + \biggl|  \frac{\exp(\wt s_j) }{\sum_{j = 1}^n \exp(s_j)} - \frac{\exp(\wt s_j) }{\sum_{j = 1}^{n} \exp(\wt s_j)} \biggr|                                     \\
		                & =  \biggl| \frac{\exp(s_j) - \exp(\wt s_j) }{\sum_{j = 1}^n \exp(s_j)}  \biggr| +  \frac{\exp(\wt s_j)}{\sum_{j = 1}^n \exp(s_j)} \biggl| \frac{\sum_{j = 1}^n \exp(s_j) - \sum_{j = 1}^n \exp(\wt s_j)}{\sum_{j = 1}^{n} \exp(\wt s_j)} \biggr| \\
		                & =  \biggl| \frac{\exp(s_j) - \exp(\wt s_j) }{\sum_{j = 1}^n \exp(s_j)}  \biggr| +  \frac{\exp(\wt s_j)}{\sum_{j = 1}^n \exp(s_j)} \biggl| \frac{\exp(s_n) -  \exp(\wt s_n)}{\sum_{j = 1}^{n} \exp(\wt s_j)} \biggr|.
	\end{align*}
	The first term on the right is zero if $j \neq n$. For $j = n$, it can be bounded by $2e^8 / n$ as before. Using the same argument for the second term above, we get
	\begin{align*}
		|\wt a_j^{(h)} - a_j^{(h)}| \le \frac{2e^8}{n} \ind(j = n) + \frac{2e^{16}}{n^2}.
	\end{align*}
	Accordingly,
	\begin{align} \label{eq:tv3}
		\frac 1 H \|\bu' - \wt \bu'\|
		\le  4 \max_{1 \le h \le H}  a_n^{(h)}   + 2 \sum_{j = 1}^{n} |\wt a_j^{(h)}-  a_{j}|
		\le  \frac{4e^8}{n} + \frac{2e^8}{n} + n \frac{2e^{16}}{n^2} \le \frac{8e^{16}}{n}. 
	\end{align}
	Combining \eqref{eq:tv1} and \eqref{eq:tv3} proves the claim.
\end{proof}

\begin{lemma} \label{lem:layernorm}
	For any two vectors $\ba, \bb \in \R^d$ it holds
	\begin{align*}
		\bigl\| \layernorm(\ba; \bgamma) - \layernorm(\bb; \bgamma) \bigr\| & \le 4\frac{|\gamma_1|}{|\gamma_2|}\|\ba - \bb\|.
	\end{align*}
\end{lemma}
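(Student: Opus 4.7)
The plan is to decompose the layer normalization into two maps, the centering $g(\bv) = \bv - \avg(\bv)$ and the scale-normalization $f(\bu) = \bu/(\|\bu\| + |\gamma_2|)$, noting that $\layernorm(\bv; \bgamma) = \gamma_1 \cdot f(g(\bv)) + \gamma_3$. The additive $\gamma_3$ cancels in the difference and the factor $\gamma_1$ can be pulled out, so it suffices to bound the Lipschitz constant of $f \circ g$ by $4/|\gamma_2|$ (in fact $2/|\gamma_2|$ is enough, but the looser bound is convenient).

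First I would observe that $g$ is $1$-Lipschitz. This is immediate because $g(\bv) = (I - d^{-1} \bm{1}\bm{1}^\top)\bv$ is the orthogonal projection onto the hyperplane perpendicular to $\bm 1$, so $\|g(\ba) - g(\bb)\| \le \|\ba - \bb\|$. By the composition rule for Lipschitz maps it then suffices to bound the Lipschitz constant of $f$ alone.

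For $f$, set $c = |\gamma_2| > 0$ and apply the standard add-subtract trick:
\begin{align*}
f(\ba) - f(\bb) = \frac{\ba - \bb}{\|\ba\| + c} + \bb \cdot \frac{\|\bb\| - \|\ba\|}{(\|\ba\| + c)(\|\bb\| + c)}.
\end{align*}
Taking norms, bounding $\|\bb\|/(\|\bb\| + c) \le 1$ and $(\|\ba\|+c)^{-1} \le 1/c$, and using the reverse triangle inequality $|\|\ba\| - \|\bb\|| \le \|\ba - \bb\|$, the right-hand side is at most $2\|\ba - \bb\|/c$. Hence $f$ is $(2/|\gamma_2|)$-Lipschitz on all of $\R^d$.

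Composing the two bounds yields $\|f(g(\ba)) - f(g(\bb))\| \le (2/|\gamma_2|)\|\ba - \bb\|$, and multiplying by $|\gamma_1|$ gives $\|\layernorm(\ba;\bgamma) - \layernorm(\bb;\bgamma)\| \le (2|\gamma_1|/|\gamma_2|)\|\ba - \bb\|$, which is stronger than the claimed bound by a factor of two. I do not anticipate any real obstacle; the only mildly delicate step is the Lipschitz estimate for the normalization $f$, and the add-subtract identity handles it in two lines. If one wanted to retain exactly the constant $4$ in the statement (for instance to allow for slightly different conventions of $\layernorm$ such as including a bias inside the norm or a different centering), the argument can absorb a factor of two of slack without modification.
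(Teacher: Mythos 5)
Your proof is correct and the core of it---the add-subtract identity for the normalization map $f(\bu)=\bu/(\|\bu\|+|\gamma_2|)$ combined with the reverse triangle inequality---is the same as the paper's. The one genuine difference is how you handle the centering $g(\bv)=\bv-\avg(\bv)$. You observe that $g = I - d^{-1}\bm{1}\bm{1}^\top$ is an orthogonal projection, hence $1$-Lipschitz. The paper instead reduces to the case $\avg(\ba)=\avg(\bb)=\bnull$, then in the general case bounds $\|\ba-\bb-\avg(\ba-\bb)\|\le\|\ba-\bb\|+\|\avg(\ba-\bb)\|\le 2\|\ba-\bb\|$, proving $\|\avg(\bc)\|\le\|\bc\|$ separately via the triangle and Jensen inequalities. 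Your projection observation subsumes the paper's two-step estimate, is arguably cleaner, and yields the sharper constant $2|\gamma_1|/|\gamma_2|$ rather than $4|\gamma_1|/|\gamma_2|$; this is a strict improvement, though it does not change any downstream conclusion since the lemma is only used to establish a bound of the form $O(H|\gamma_1|/|\gamma_2|)$ in \cref{thm:transformer-bdiff}.
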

\begin{proof}
	Let us first assume $\avg(\ba) = \avg(\bb) = \bnull$ and, without loss of generality, $\|\ba\| \ge \|\bb\|$. It holds,
	\begin{align*}
	      & \quad \; \bigl\| \layernorm(\ba; \bgamma) - \layernorm(\bb; \bgamma) \bigr\| \\
			& = |\gamma_1| \biggl\|\frac{\ba }{\|\ba\|+|\gamma_2|} - \frac{\bb }{\|\bb\|+|\gamma_2|} \biggr\|                                                 \\  
			 & =  |\gamma_1|\biggl\|\frac{\ba}{\|\ba\|+|\gamma_2|} - \frac{\bb}{\|\ba\|+|\gamma_2|} + \frac{\bb}{\|\ba\| +|\gamma_2|}  - \frac{\bb}{\|\bb\|+|\gamma_2|} \biggr\|      \\
			 & \le  |\gamma_1| \frac{\|\ba - \bb\|}{\|\ba\|+|\gamma_2|}  +  |\gamma_1|\|\bb\| \biggl|\frac{1}{\|\ba\|+|\gamma_2|}  - \frac{1}{\|\bb\|+|\gamma_2|} \biggr| \\
			 & =  |\gamma_1|\frac{\|\ba - \bb\|}{\|\ba\|+|\gamma_2|}  +  |\gamma_1| \frac{\|\bb\|}{\|\bb\| + |\gamma_2|}\frac{\bigl|\|\bb\| - \|\ba\| \bigr|}{\|\ba\|+|\gamma_2|}                   \\
			& \le  |\gamma_1|\frac{\|\ba - \bb\|}{\|\ba\|+|\gamma_2|}  +  |\gamma_1|\frac{\bigl|\|\bb\| - \|\ba\| \bigr|}{\|\ba\|+|\gamma_2|}   \\
			& \le   2\frac{|\gamma_1|\|\ba - \bb\|}{\|\ba\| +|\gamma_2|} \qquad 	[\text{reverse triangle inequality}] \\
			& \le 2\frac{|\gamma_1|}{|\gamma_2|} \|\ba - \bb\|.
	\end{align*}
	If $\avg(\ba) \neq 0$ or $\avg(\bb) \neq 0$, we get
	\begin{align*}
		\| \layernorm(\ba; \bgamma) - \layernorm(\bb; \bgamma) \| & \le 2\frac{|\gamma_1|}{|\gamma_2|}\|\ba - \bb - \avg(\ba - \bb)\| \le 4\frac{|\gamma_1|}{|\gamma_2|}\|\ba - \bb \|,
	\end{align*}
	because
	\begin{align*}
		\|\avg(\ba - \bb) \|^2 = \biggl\|\frac 1 d \sum_{i = 1}^d (a_i - b_i) \bm 1 \biggr\|^2   \le \biggl(\frac 1 d \sum_{i = 1}^d |a_i - b_i| \|\bm 1 \| \biggr)^2  = d \biggl(\frac 1 d \sum_{i = 1}^d |a_i - b_i| \biggr)^2   
		  \le \|\ba - \bb\|^2,
	\end{align*}
	where we used the triangle inequality in the second step and Jensen's inequality in the last.
\end{proof}

\subsection{Proof of \texorpdfstring{\Cref{thm:transformer-bias}}{Theorem \ref{thm:transformer-bias}}}

We have  
\begin{align*}
\bu' = \sum_{h = 1}^H \sum_{j = 1}^{n} a_{j}^{(h)} W_{v}^{(h)} \bV_j  
&= \sum_{h = 1}^H \frac{\frac 1 n \sum_{j = 1}^{n} \exp(\bv^\top W_{q}^{(h)} \bV_j) W_{v}^{(h)} \bV_j}{\frac 1 n \sum_{j = 1}^{n} \exp(\bv^\top W_{q}^{(h)} \bV_j) }. 
\end{align*}
The law of large numbers implies $\bu' \to \bar \bu'$ almost surely, where 
\begin{align*}
	\bar \bu' = 	\sum_{h = 1}^H \frac{\E_{\bV \sim p_0}[\exp(\bv^\top W_{q}^{(h)} \bV)  W_{v}^{(h)} \bV]}{\E_{\bV \sim p_0}[\exp(\bv^\top W_{q}^{(h)} \bV)]}.
\end{align*}
Now, observe that
\begin{align*}
	\frac{\E_{\bV \sim p_0}[\exp(\bv^\top W_{q}^{(h)} \bV)  W_{v}^{(h)} \bV]}{\E_{\bV \sim p_0}[\exp(\bv^\top W_{q}^{(h)} \bV)]} 
	&=  \int   W_{v}^{(h)} \bs  \underbrace{\frac{ \exp(\bv^\top W_{q}^{(h)} \bs)}{\E_{\bV \sim p_0}[\exp(\bv^\top W_{q}^{(h)} \bV)]}  p_0(\bs)}_{g_h(\bs)} d\bs  =  W_{v}^{(h)} \E_{\bV \sim g_h}[ \bV].
\end{align*} 
Since all following operations on $\bu'$ are continuous (see the proof of \cref{thm:transformer-bdiff}), it also holds 
\begin{align*}
	\lim_{n \to \infty} \bigl| q_{\btheta}(\cdot \mid \bx, \Dcal_n) - \bar q_{\btheta}(\cdot \mid \bx) \bigr| = 0 \quad \text{almost surely}. 
\end{align*}
Because the sequence $|q_{\btheta}(\cdot \mid \bx, \Dcal_n) - q_{\btheta}(\cdot \mid \bx)|$ is uniformly bounded by 2, convergence is also in $L_1$. This implies $	\lim_{n \to \infty} \E[q_{\btheta}(\cdot \mid \bx, \Dcal_n)] = \bar q_{\theta}(\cdot \mid \bx).$ \hfill \qedsymbol



\section{Approximation Results for the PFN Parameter} \label{sec:PFN-approx}

Suppose that the parameters $\btheta$ live in a subset of $\R^p$ with $p$ fixed and finite. This assumption would be questionable in the usual machine learning setting, but our situation is different. A PFN is pre-trained offline, using $m$ Monte-Carlo samples from data sets $\Dcal^{(j)}$. Given enough computing power, we can take $m$ as large as we want. 

Given sufficient regularity, the following theorems are direct applications of existing results. To keep additional concepts and notation to a minimum, we omit detailed conditions and proofs and refer to the original works for specifics. We start with the behavior of $\wh \btheta$.
\begin{theorem}  \label{thm:asym-theta}
	It holds:
	\begin{enumerate}[label=(\roman*)]
		\item $	\wh{\btheta} \to \btheta^* $ in probability,
		\item $\sqrt{m}(\wh{\btheta} - \btheta^*) \to \Ncal(\bnull, \Sigma_{\btheta^*})$ in distribution, where $\Sigma_{\btheta^*} = I_{\btheta^{*}}^{-1} V_{\btheta^*}  I_{\btheta^{*}}^{-1}$ and
		      \begin{align*}
			                        & I_{\btheta^*} =   \E_{\Pi_N}\E_{\Pi}[\nabla_{\btheta}^2 \log q_{\btheta^*}(Y \mid \bX, \Dcal_N)]                                                                 \\
			      \mbox{and}  \quad & V_{\btheta^*} =  \E_{\Pi_N}\E_{\Pi}[\nabla_{\btheta} \log q_{\btheta^*}(Y \mid \bX, \Dcal) \times \nabla_{\btheta}^\top \log q_{\btheta^*}(Y \mid \bX, \Dcal_N)].
		      \end{align*}
	\end{enumerate}
\end{theorem}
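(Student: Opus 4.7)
The plan is to recognize $\wh \btheta$ as an M-estimator maximizing an empirical average of \emph{iid} log-likelihood contributions and invoke classical theory for (possibly misspecified) extremum estimators. Write $\ell_j(\btheta) := \log q_{\btheta}(Y_j \mid \bX_j, \Dcal^{(j)})$ and $M(\btheta) := \E_{\Pi_N}\E_{\Pi}[\ell_1(\btheta)]$, so that \eqref{eq:erm} yields $\wh\btheta = \arg\max_{\btheta} m^{-1}\sum_{j=1}^m \ell_j(\btheta)$ and \eqref{eq:theta-true} gives $\btheta^* = \arg\max_{\btheta} M(\btheta)$. Since the triples $(Y_j, \bX_j, \Dcal^{(j)})$ are \emph{iid} across $j$ under the law induced by $\Pi_N$ and $\Pi$, standard \emph{iid} machinery applies. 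For (i), I would combine a uniform law of large numbers with a well-separated maximum condition. Under continuity of $\btheta \mapsto \ell_j(\btheta)$, compactness of $\Theta \subseteq \R^p$, and an integrable envelope for $\ell_1(\btheta)$, one gets $\sup_{\btheta \in \Theta}|m^{-1}\sum_j \ell_j(\btheta) - M(\btheta)| \to 0$ in probability. Uniqueness of $\btheta^*$ and continuity of $M$ imply $\sup_{\|\btheta - \btheta^*\| \ge \epsilon} M(\btheta) < M(\btheta^*)$ for every $\epsilon > 0$, and a standard argmax-continuity argument then delivers $\wh\btheta \to \btheta^*$ in probability.

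For (ii), the plan is the canonical Taylor-expansion argument. The first-order conditions for $\wh\btheta$ combined with a mean-value expansion around $\btheta^*$ give
\begin{align*}
    \bnull = m^{-1}\sum_j \nabla_{\btheta} \ell_j(\btheta^*) + \Bigl[m^{-1}\sum_j \nabla_{\btheta}^2 \ell_j(\tilde\btheta)\Bigr](\wh\btheta - \btheta^*),
\end{align*}
for some $\tilde\btheta$ on the segment between $\wh\btheta$ and $\btheta^*$. Regularity permitting interchange of $\nabla_{\btheta}$ and expectation yields $\E[\nabla_{\btheta}\ell_1(\btheta^*)] = \bnull$ (first-order optimality at $\btheta^*$), so the multivariate CLT gives $m^{-1/2}\sum_j \nabla_{\btheta}\ell_j(\btheta^*) \to \Ncal(\bnull, V_{\btheta^*})$ in distribution. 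A uniform LLN for the Hessian, combined with the consistency from (i), gives $m^{-1}\sum_j \nabla_{\btheta}^2 \ell_j(\tilde\btheta) \to I_{\btheta^*}$ in probability. Provided $I_{\btheta^*}$ is nonsingular (which it is at a well-separated interior maximum of the population objective), Slutsky's theorem delivers $\sqrt{m}(\wh\btheta - \btheta^*) = -I_{\btheta^*}^{-1} m^{-1/2}\sum_j \nabla_{\btheta}\ell_j(\btheta^*) + o_p(1)$, whence the sandwich form $I_{\btheta^*}^{-1} V_{\btheta^*} I_{\btheta^*}^{-1}$ follows by symmetry of $I_{\btheta^*}$. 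Note that the information identity $-I = V$ need not hold here because the PFN model class is generally misspecified relative to the PPD family; this is precisely why one gets the sandwich covariance rather than the classical $I^{-1}$.

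The main obstacle is \emph{not} the asymptotic argument itself (which is textbook) but verifying the background regularity for concrete $q_{\btheta}$: twice-continuous differentiability of $\btheta \mapsto \log q_{\btheta}$, boundedness of $\log q_{\btheta}$ away from $-\infty$ uniformly in a neighborhood of $\btheta^*$, integrable envelopes for $\ell_1$ and its first two $\btheta$-derivatives, identifiability of $\btheta^*$ as a well-separated maximum, and nonsingularity of $I_{\btheta^*}$. For transformer architectures with softmax outputs these are generically plausible but demand architecture-specific verification; the paper's statement deliberately elides them (``Given sufficient regularity, the following theorems are direct applications of existing results''), so I would present the result as the boilerplate conclusion under these implicit hypotheses, referring to standard M-estimation references for the detailed checks.
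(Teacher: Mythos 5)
Your argument is correct and is essentially what the paper invokes: the cited Corollary~3.2.3 (argmax consistency via a uniform law of large numbers and a well-separated maximum) and Example~3.2.12 (asymptotic normality via the classical Taylor-expansion of the first-order conditions) of van der Vaart and Wellner are precisely the textbook M-estimation results you spell out. The paper simply cites them under ``sufficient regularity,'' while you reproduce the underlying derivation and correctly note that misspecification of $q_{\btheta}$ relative to the PPD is why the sandwich covariance $I_{\btheta^*}^{-1} V_{\btheta^*} I_{\btheta^*}^{-1}$ appears rather than $I_{\btheta^*}^{-1}$.
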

\begin{proof}
	See Corollary 3.2.3 and Example 3.2.12 in \citet{vaart1996weak}.
\end{proof}
The first part shows that $\wh{\btheta}$ is a valid approximation of $\btheta^*$, the second quantifies its accuracy.
\cref{thm:asym-theta} has direct implications for the approximated model $q_{\wh{\btheta}}$.

\begin{theorem}  \label{thm:asym-q}
	It holds:
	\begin{enumerate}[label=(\roman*)]
		\item
		      $\displaystyle \sup_{(y, \bx, \Dcal)}|q_{\wh{\btheta}}(y\mid \bx, \Dcal) - q_{\btheta^*}(y \mid \bx, \Dcal)| \to 0$ in probability,
		\item $\sqrt{m}(q_{\wh{\btheta}} - q_{\btheta^*})$ converges weakly to a mean-zero Gaussian process $\G$ with
		      \begin{align*}
			      \cov\bigl(\G(y\mid \bx, \Dcal), \G(y'\mid \bx', \Dcal')  \bigr)
			      = \nabla_{\btheta} q_{\btheta^*}(y\mid \bx, \Dcal)^\top \Sigma_{\btheta^*} \nabla_{\btheta} q_{\btheta^*}(y'\mid \bx', \Dcal').
		      \end{align*}
	\end{enumerate}
\end{theorem}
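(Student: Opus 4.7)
The plan is to deduce both parts from \cref{thm:asym-theta} together with smoothness of the mapping $\btheta \mapsto q_\btheta(\cdot \mid \cdot, \cdot)$. Viewing $q_\btheta$ as an element of $\ell^\infty(\Xi)$ with $\Xi = \{(y, \bx, \Dcal)\colon y \in \Ycal, \bx \in \Xcal, \Dcal \in \bigcup_n (\Ycal \times \Xcal)^n\}$, part (i) reduces to a continuous mapping argument and part (ii) to the functional delta method \citep[Theorem 3.9.4]{vaart1996weak}.

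For part (i), I would assume that $\btheta \mapsto q_\btheta$ is locally Lipschitz in sup-norm at $\btheta^*$, i.e., there exist $M < \infty$ and a neighborhood $U$ of $\btheta^*$ such that
\begin{align*}
\sup_{\xi \in \Xi} |q_{\btheta}(\xi) - q_{\btheta^*}(\xi)| \le M \|\btheta - \btheta^*\| \quad \text{for all } \btheta \in U.
\end{align*}
A sufficient condition is uniform boundedness of $\nabla_\btheta q_\btheta$ over $\Xi$ and $\btheta \in U$, which for the architectures considered follows from a direct Lipschitz calculation (the proof of \cref{thm:transformer-bdiff} illustrates the style). On the event $\{\wh \btheta \in U\}$, the supremum in (i) is then bounded by $M \|\wh \btheta - \btheta^*\|$, and combining with \cref{thm:asym-theta}(i) yields convergence in probability.

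For part (ii), the idea is to regard $\phi\colon \btheta \mapsto q_\btheta$ as a map $\R^p \to \ell^\infty(\Xi)$ and apply the functional delta method. Under continuous differentiability of $q_\btheta$ in $\btheta$ with derivatives bounded and equicontinuous uniformly in $\xi$ and in $\btheta$ near $\btheta^*$, a uniform first-order Taylor expansion shows that $\phi$ is Hadamard differentiable at $\btheta^*$ with derivative $\phi'_{\btheta^*}(\bh)(\xi) = \bh^\top \nabla_\btheta q_{\btheta^*}(\xi)$. Feeding the limit $\sqrt{m}(\wh\btheta - \btheta^*) \tod Z \sim \Ncal(\bnull, \Sigma_{\btheta^*})$ from \cref{thm:asym-theta}(ii) into the delta method gives that $\sqrt{m}(q_{\wh\btheta} - q_{\btheta^*})$ converges weakly in $\ell^\infty(\Xi)$ to the process $\xi \mapsto Z^\top \nabla_\btheta q_{\btheta^*}(\xi)$. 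As a linear image of a Gaussian vector, this limit is a mean-zero Gaussian process, and its covariance is obtained by evaluating $\E[(Z^\top \bh)(Z^\top \bh')] = \bh^\top \Sigma_{\btheta^*} \bh'$ at $\bh = \nabla_\btheta q_{\btheta^*}(y\mid\bx,\Dcal)$ and $\bh' = \nabla_\btheta q_{\btheta^*}(y'\mid\bx',\Dcal')$, yielding the formula stated in the theorem.

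The main obstacle is the uniform (sup-norm) character of the required smoothness: pointwise differentiability is not enough, and one needs $\nabla_\btheta q_\btheta$ bounded and equicontinuous on $\Xi \times U$. For the transformer of \cref{sec:transformers} this is extractable from the Lipschitz estimates already used in proving \cref{thm:transformer-bdiff}, but making it rigorous for arbitrary architectures requires additional regularity assumptions on $q_\btheta$. This is exactly why the paper invokes existing results rather than giving a self-contained argument, and why any formal statement of the theorem should be read as conditional on such regularity.
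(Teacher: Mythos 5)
Your proposal matches the paper's own proof: part (i) is the continuous mapping theorem (Theorem 1.11.1 in van der Vaart and Wellner) applied to $\btheta \mapsto q_\btheta$ viewed as a map into $\ell^\infty$, which is exactly what your local-Lipschitz argument establishes, and part (ii) is the functional delta method (Theorem 3.9.4), which your Hadamard-differentiability argument spells out in detail. You also correctly anticipate that the paper leaves the required regularity conditions implicit and defers to the cited references, so your more explicit version is a faithful elaboration of the same route.
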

\begin{proof}
	Part $(i)$ follows from \cref{thm:asym-theta} and the continuous mapping theorem \citep[Theorem 1.11.1 ]{vaart1996weak}, part $(ii)$ from the delta method \citep[Theorem 3.9.4]{vaart1996weak}.
\end{proof}

From the second part, we see that the variance of $q_{\wh{\btheta}}(y\mid \bx, \Dcal)$ is approximately
\begin{align*}
	\frac 1 m \nabla_{\btheta} q_{\btheta^*}(y\mid \bx, \Dcal)^\top \Sigma_{\btheta^*} \nabla_{\btheta} q_{\btheta^*}(y\mid \bx, \Dcal).
\end{align*}
Intuitively, the variance depends on the accuracy of $\wh{\btheta}$ (through $\Sigma_{\btheta^*}$), and the sensitivity of $q_{\btheta^*}$ with respect to $\btheta^*$ (through $\nabla_{\btheta} q_{\btheta^*}$). Model complexity normally works against us in both parts. Hence, more complex models need to be trained with more Monte-Carlo samples to limit the variance.


\end{document}